\newtheorem{defi}{Definition}[section]
\newtheorem{prp}{Proposition}[section]
\newtheorem{lem}{Lemma}[section]
\newtheorem{rmk}{Remark}[section]
\title{Function Space Particle Optimization for Bayesian Neural Networks}
\author{Ziyu Wang,~ Tongzheng Ren,~ Jun Zhu\thanks{corresponding author},~ Bo Zhang
\\ Department of Computer Science \& Technology, Institute for Artificial Intelligence, \\
State Key Lab for Intell. Tech. \& Sys., BNRist Center, THBI Lab, Tsinghua University 
\\ \texttt{\{wzy196,rtz19970824\}@gmail.com, \{dcszj,dcszb\}@tsinghua.edu.cn}
}
\newcommand{\mred}[1]{{\color{red}#1}}
\newcommand{\norm}[1]{\lVert #1 \rVert}
\newcommand{\KL}[2]{\mathrm{KL}\left(#1\Vert #2\right)}
\newcommand{\mbf}[1]{\mathbf{#1}}
\newcommand{\mc}[1]{\mathcal{#1}}
\newcommand{\mb}[1]{\mathbb{#1}}
\newcommand{\nablazj}[1]{\nabla_{\theta^{(#1)}}}
\newcommand{\EE}{\mathbb{E}}
\newcommand{\bx}{\mathbf{x}}
\newcommand{\by}{\mathbf{y}}
\newcommand{\bz}{\mathbf{z}}
\newcommand{\bX}{\mathbf{X}}
\newcommand{\bY}{\mathbf{Y}}
\newcommand{\bK}{\mathbf{K}}
\newcommand{\cY}{\mathcal{Y}}
\newcommand{\cX}{\mathcal{X}}
\newcommand{\train}{\mathrm{train}}
\newcommand{\jacobian}[2]{\left(\frac{\partial #1}{\partial #2}\right)}
\newcommand{\pushfwd}[2]{{#1}_\#{#2}}
\newcommand{\variation}[2]{\frac{\delta {#1}}{\delta {#2}}}
\begin{document}

\maketitle

\begin{abstract}
While Bayesian neural networks (BNNs) have drawn increasing attention, their posterior inference remains challenging, due to the high-dimensional and over-parameterized nature. 
Recently, several highly flexible and scalable variational inference procedures based on the idea of \emph{particle optimization} have been proposed. These methods directly optimize a set of particles to approximate the target posterior. However, their application to BNNs often yields sub-optimal performance, as they have a particular failure mode on over-parameterized models. In this paper, we propose to solve this issue by performing particle optimization directly in the space of regression functions. We demonstrate through extensive experiments that our method successfully overcomes this issue, and outperforms strong baselines in a variety of tasks including prediction, defense against adversarial examples, and reinforcement learning.\footnote{This version extends the ICLR paper with results connecting the final algorithm to Wasserstein gradient flows.}
\end{abstract}

\section{Introduction}

Bayesian nerual networks (BNNs) provide a principled approach to reasoning about the \emph{epistemic uncertainty}---uncertainty in model prediction due to the lack of knowledge. 
Recent work has demonstrated the potential of BNNs in safety-critical applications like medicine and autonomous driving, deep reinforcement learning, and defense against adversarial samples \citep[see e.g.][]{pmlr-v80-ghosh18a,zhang2018learning,feng2018towards,smith2018understanding}.

Modeling with BNNs involves placing priors on neural network weights, and performing posterior inference with the observed data. However, posterior inference for BNNs is challenging, due to the multi-modal and high dimensional nature of the posterior. Variational inference (VI) is a commonly used technique for practical approximate inference. Traditional VI methods approximate the true posterior with oversimplified distribution families like factorized Gaussians, which can severely limit the approximation quality and induce pathologies such as over-pruning \citep{trippe2018overpruning}. These limitations have motivated the recent development of implicit VI methods \citep{li2018gradient, shi2018kernel}, 
which allow the use of flexible approximate distributions without a tractable density. However, most of the implicit inference methods require to learn a ``generator network'' that maps a simple distribution to approximate the target posterior. Inclusion of such a generator network can introduce extra complexity, and may become infeasible when the number of parameters is very large, as in the case for BNNs.

Compared with those generator-based methods, particle-optimization-based variational inference (POVI) methods constitute a simpler but more efficient class of implicit VI methods. In an algorithmic perspective, POVI methods iteratively update a set of \emph{particles}, so that the corresponding empirical probability measure approximates the target posterior well. Formally, these methods consider the space of probabilistic measures equipped with different metrics, and simulate a gradient flow that converges to the target distribution. 
Examples of POVI methods include Stein variational gradient descent \citep[SVGD;][]{liu2016stein}, gradient flows in the 2-Wasserstein space \citep{chen2018unified}, and accelerated first-order methods in the 2-Wasserstein space \citep{liu2018accelerated}.

While POVI methods have shown promise in a variety of challenging inference problems, their performance in BNNs is still far from ideal, as with a limited number of particles, it is hard to characterize the highly complex weight-space posterior. The first problem is the curse of dimensionality: \citet{pmlr-v80-zhuo18a} and \citet{pmlr-v80-wang18l} show that for SVGD with a RBF kernel, particles can collapse to the maximum a posteriori (MAP) estimate as the dimension of parameter increases. One may hope to alleviate such a problem by switching to other POVI methods that could be more suitable for BNN inference; however, this is not the case: BNN is \emph{over-parameterized}, and there exist a large number of local modes in the weight-space posterior that are distant from each other, yet corresponding to the same regression function. Thus a possible particle approximation is to place each particle in a different mode. In prediction, such an approximate posterior will not perform better than a single point estimate. In other words, good approximations for the weight-space posterior do not necessarily perform well in prediction. 
To address the above issue, we propose to perform POVI directly for the posterior of regression functions, i.e. the \emph{function-space posterior}, instead for the weight-space posterior. 
In our algorithm, particles correspond to regression functions. We address the infinite dimensionality of function space, by approximating the function particles by weight-space parameters, and presenting a mini-batch version of particle update. 
Extensive experiments show that our method avoids the degenerate behavior of weight-space POVI methods, and leads to significant improvements on several tasks, including prediction, model robustness, and exploration in reinforcement learning.

The rest of our paper is organized as follows. We first briefly review BNNs and POVI in Section \ref{sec:bg}. In Section \ref{sec:method} we present our algorithm for function-space POVI. We compare our method with existing work in Section \ref{sec:related_work}, and finally demonstrate our method's effectiveness in Section \ref{sec:experiments}.

\section{Background}
\label{sec:bg}

\paragraph{Bayesian Neural Networks (BNNs)} Consider a supervised learning task. Let $\bX=\{x_i\}_{i=1}^N$ denote the training inputs and $\bY=\{y_i\}_{i=1}^N$ denote the corresponding outputs, with $x_i \in \cX$ and $y_i \in \cY$, respectively. Let $f(\cdot;\theta):\cX\rightarrow \mb{R}^F$ denote a mapping function parameterized by a neural network, where $F$ will be clear according to the task. Then, we can define a conditional distribution $p(y | x,\theta)$ by leveraging the flexibility of function $f(x;\theta)$. For example, for real-valued regression where $\cY=\mb{R}$, we could set $F=1$ and define the conditional distribution as $p(y|x,\theta)=\mc{N}(y|f(x;\theta),\sigma^2)$, where $\sigma^2$ is the variance of observation noise; for a classification problem with $K$ classes, we could set $F=K$ and let $p(y|x,\theta)=\mathrm{Multinomial}(y|\mathrm{softmax}(f(x;\theta)))$. A BNN model further defines a prior $p(\theta)$ over the weights $\theta$. Given the training data $(\bX,\bY)$, one then infers the posterior distribution $p(\theta|\bX,\bY) \propto p(\theta)p(\bY|\theta,\bX)$, and for a test data point $x_{\mathrm{test}}$, $y$ is predicted to have the distribution $p(y_{\mathrm{test}}|x_{\mathrm{test}},\bX,\bY)=\int p(y_{\mathrm{test}}|x_{\mathrm{test}},\theta) p(d\theta|\bX,\bY)$.

Posterior inference for BNNs is generally difficult due to the high-dimensionality of $\theta$. The \emph{over-parameterized} nature of BNNs further exacerbates the problem: for over-parameterized models, there exist multiple $\theta$ that correspond to the same likelihood function $p(y|x,\theta)$. One could easily obtain an exponential number of such $\theta$, by reordering the weights in the network. Each of the $\theta$ can be a mode of the posterior, which makes approximate inference particularly challenging for BNNs.

\paragraph{Particle-Optimization based Variational Inference (POVI)} Variational inference aims to find an approximation of the true posterior. POVI methods~\citep{liu2016stein,chen2018unified} view the approximate inference task as minimizing some energy functionals over probability measures, which obtain their minimum at the true posterior. The optimization problem is then solved by simulating a corresponding gradient flow in certain metric spaces, i.e. to simulate a PDE of the form 
$$\partial_t q_t = -\nabla \cdot(\mbf{v}\cdot q_t),$$ where $q_t$ is the approximate posterior at time $t$, and $\mbf{v}$ is the gradient flow depending on the choice of metric and energy functional. As $q_t$ can be arbitrarily flexible, it cannot be maintained exactly in simulation. Instead, POVI methods approximate it with a set of particles $\{\theta^{(i)}\}_{i=1}^n$, i.e. $q_t(\theta)\approx \frac{1}{n}\sum_{i=1}^n \delta(\theta - \theta^{(i)}_t)$, and simulate the gradient flow with a discretized version of the ODE $\mathrm{d}\theta^{(i)}_t / \mathrm{d}t = -\mbf{v}(\theta^{(i)}_t)$. In other words, in each iteration, we update the particles with
\begin{align} \label{equ:updatex}
\!\!\!\!\!\!\! \theta^i_{\ell+1}  &\gets   \theta^i_\ell - \epsilon_\ell \mbf{v}(\theta^i_\ell),
\end{align}
where $\epsilon_\ell$ is the step-size at the $\ell$-th iteration. Table~\ref{tbl:gfchoices} summarizes the common choices of the gradient flow $\mbf{v}$ for various POVI methods. We can see that in all cases, $\mbf{v}$ consists of a (possibly smoothed) \textbf{log posterior gradient term}, which pushes particles towards high-density regions in the posterior; and a \textbf{repulsive force term} (e.g. $\nabla_{\theta^{(j)}}\bK_{ij}$ for SVGD), which prevents particles from collapsing into a single MAP estimate.

While the flexibility of POVI methods is unlimited in theory, the use of finite particles can make them un-robust in practice, especially when applied to high-dimensional and over-parameterized models. The problem of high dimensionality is investigated in \citet{pmlr-v80-zhuo18a} and \citet{pmlr-v80-wang18l}. Here we give an intuitive explanation of the over-parameterization problem\footnote{People familiar with SVGD could argue that this issue can be mitigated by choosing a reasonable kernel function in $\mbf{v}$, e.g. a kernel defined on $f(\cdot;\theta)$. We remark that similar kernels do not work in practice. We provide detailed experiments and discussion in Appendix~\ref{app:svgd-alt-kernels}.}: in an over-parameterized model like BNN, the target posterior has a large number of modes that are sufficiently distant from each other, yet corresponding to the same regression function $f$. A possible convergence point for POVI with finite particles is thus to occupy all these modes, as such a configuration has a small 2-Wasserstein distance~\citep{ambrosio2008gradient} to the true posterior. In this case, prediction using these particles will not improve over using the MAP estimate. Such a degeneracy is actually observed in practice; see Section~\ref{sec:synthetic-data} and Appendix~\ref{app:povi-synth}.

\begin{table}[tb]\vspace{-.3cm}
\caption{Common choices of the gradient flow $\mbf{v}$ in POVI methods, where $k$ denotes a kernel, and $\bK_{ij} := k(\theta^{(i)},\theta^{(j)})$ is the gram matrix. 
We omit the subscript $\ell$ for brevity.}\label{tbl:gfchoices}\vspace{-.2cm}
\centering
\begin{tabular}{lc}\hline
Method   & $-\mathbf{v}(\theta^{(i)})$ \\\hline
SVGD \citep{liu2016stein}     & $\frac{1}{n}\sum_{j=1}^{n} \bK_{ij} \nabla_{\theta^{(j)}} \log p(\theta^{(j)}|\bx) + \nabla_{\theta^{(j)}} \bK_{ij}$ \\
w-SGLD-B \citep{chen2018unified} &  $\begin{aligned}\textstyle 
    \nablazj{i} \log p(\theta^{(i)}|\bx) + \sum_{j=1}^n \nablazj{j} \bK_{ji} / \sum_{k=1}^n \bK_{jk}\\ \textstyle  
    + \sum_{j=1}^n \nablazj{j} \bK_{ji} / \sum_{k=1}^n \bK_{ik}
\end{aligned}$                        \\
pi-SGLD \citep{chen2018unified}  &  sum of $-\mbf{v}$ in SVGD and w-SGLD-B \\
GFSF \citep{liu2018accelerated}   &  $\nablazj{i} \log p(\theta^{(i)}|\bx) + \sum_{j=1}^n (\bK^{-1})_{ij} \nablazj{j} \bK_{ij}$ \\\hline                                                              
\end{tabular}
\end{table}

\section{Function Space Particle Optimization}
\label{sec:method}

To address the above degeneracy issue of existing POVI methods, we present a new perspective as well as a simple yet efficient algorithm to perform posterior inference in the space of regression functions, rather than in the space of weights. 

Our method is built on the insight that 
when we model with BNNs, there exists a map from the network weights $\theta$ to a corresponding regression function $f$, $\theta\mapsto f(\cdot;\theta)$, and the prior on $\theta$ implicitly defines a prior measure on the space of $f$, denoted as $p(f)$. Furthermore, the conditional distribution $p(y|x,\theta)$ also corresponds to a conditional distribution of $p(y|x,f)$. Therefore, posterior inference for network weights can be viewed as posterior inference for the regression function $f$.

A nice property of the function space inference is that it does not suffer from the over-parameterization problem. However, it is hard to implement, as the function space is infinite-dimensional, and the prior on it is implicitly defined. We will present a simple yet effective solution to this problem. In the sequel, we will use 
boldfaced symbols $\bx$ (or $\by$) to denote a subset of samples from $\cX$ (or $\cY$)\footnote{Note $\bx$ does not need to be a subset of $\bX$.}. We denote the approximate posterior measure as $q(f)$. For any finite subset $\bx$, we use $f(\bx)$ to denote the vector-valued evaluations of the regression function on $\bx$, and define $p(f(\bx)),q(f(\bx))$ accordingly. We will use the notation $[n] := \{1,2,\dots,n\}$.

\subsection{Function Space Particle Optimization on a Finite Input Space}\label{sec:motivating-setup}

For the clarity of presentation, we start with a simple setting, where $\mc{X}$ is a finite set and the gradient for the (log) \emph{function-space prior}, 
$\nabla_{f(\bx)} \log p(f(\bx))$ is available for any $\bx$. These assumptions will be relaxed in the subsequent section. 
In this case, we can treat the function values $f(\cX)$ as the parameter to be inferred, and apply POVI in this space. Namely, we maintain $n$ particles $f^1(\cX),\dots,f^n(\cX)$; and in step $\ell$, update each particle with \begin{equation}\label{eq:exactGF}
f^i_{\ell+1}(\cX) \leftarrow f^i_\ell(\cX) - \epsilon_\ell \mbf{v}[f^i_\ell(\cX)].
\end{equation}
This algorithm is sound when $f(\cX)$ is finite-dimensional, as theories on the consistency of POVI \citep[e.g.][]{NIPS2017_6904} directly apply. For this reason, we refer to this algorithm as the \emph{exact version} of function-space POVI, even though the posterior is still approximated by particles.

However, even in the finite-$\cX$ case, this algorithm can be inefficient for large $\cX$. We address this issue by approximating function-space particles in a weight space, and presenting a mini-batch version of the update rule. As we shall see, these techniques are theoretically grounded, and naturally generalize to the case when $\mathcal{X}$ is infinite. 

\subsubsection{Parametric Approximation to Particle Functions}\label{sec:parametric-approximation}

Instead of explicitly maintaining $f(x)$ for all $x\in \cX$, we can represent a function $f$ by a parameterized neural network. Although any flexible network can be used, here we choose the original network with parameters $\theta$ of the BNN model, which can faithfully represent any function in the support of the function prior. Note that although we now turn to deal with weights $\theta$, our method is significantly different from the existing POVI methods, as explained below in Remark~\ref{rmk:eq3-as-enhanced-ensemble} and Appendix~\ref{app:relation}. 
Formally, our method maintains $n$ weight-space particles $\theta^i_\ell$ ($i\in [n]$) at iteration $\ell$, and defines the update rule as follows:
\begin{equation}
\theta^i_{\ell+1} \leftarrow \theta^i_\ell - \epsilon_\ell \left(\frac{\partial f(\cX;\theta^i_\ell)}{\partial \theta^i_\ell}\right)^\top \mbf{v}[f^{i}_\ell(\cX)],
\label{eq:svgdAssignment}
\end{equation}
where we use the shorthand $f^i_\ell(\cdot) := f(\cdot;\theta^i_\ell)$. As the weights correspond to $n$ regression functions, the rule (\ref{eq:svgdAssignment}) essentially updates the particles of $f$. 

We will prove in Section~\ref{sec:theoretical-justif} that the parametric approximation will not influence convergence. 
In the following remarks, we relate \eqref{eq:svgdAssignment} to the ``exact rule'' \eqref{eq:exactGF}, and connect it to other algorithms.

\begin{rmk}
\textbf{(\eqref{eq:svgdAssignment} as a single step of GD)} The update rule \eqref{eq:svgdAssignment} essentially is a one-step gradient descent (GD) to minimize the squared distance between $f^i_{\ell+1}(\cX)$ and $f^i_\ell(\cX)-\epsilon_\ell \mbf{v}[f^i_\ell(\cX)]$ (the exact function-space update \eqref{eq:exactGF}) under the parametric representation of $f$. Similar strategies have been successfully used in deep reinforcement learning~\citep{mnih2015human}.\\
Also note that \eqref{eq:svgdAssignment} is easy to implement, as it closely relates to the familiar back-propagation (BP) procedure for the maximum likelihood estimation (MLE). Namely, 
the GD for MLE corresponds to the update $\theta_{\ell+1}\leftarrow \theta_\ell-\epsilon_\ell\left(\frac{\partial f(\cX;\theta_\ell)}{\partial \theta_\ell}\right)\nabla_{f(\cX;\theta_\ell)}\log p(\bY|\bX,f(\cX;\theta_\ell))$, where $\nabla_{f(\cX;\theta_\ell)}\log p(\bY|\bX,f(\cX;\theta_\ell))$ is commonly referred to as the ``error signal of top-layer network activation'' in BP. In our algorithm, this term is replaced with $\mbf{v}$. 
Recall that $\mbf{v}$ in commonly used POVI algorithms is the sum of a possibly smoothed log posterior gradient, which is similar to $\nabla_f \log p(\bY|\bX,f)$ used in MLE training, and the \emph{repulsive force (RF) term}. Thus our algorithm can be seen as BP with a modified top-layer error signal. \end{rmk}

\begin{rmk}\label{rmk:eq3-as-enhanced-ensemble}
\textbf{(Relation between \eqref{eq:svgdAssignment}, ensemble training, and weight-space POVI)} 
The widely used ensemble training method~\citep{opitz1999popular} obtains $n$ MAP estimates separately via GD. As stated above, our algorithm can be seen as a BP procedure with a modified top-layer error signal, thus it is closely related to ensemble training. The main difference is that our algorithm adds a RF term to the error signal, which pushes the prediction of each particle away from that of others. As an example, consider function-space SVGD with RBF kernels. The function-space RF term is $\frac{1}{n}\sum_j \nabla_{f^j_\ell} k(f^i_\ell,f^j_\ell)\propto \sum_{j\ne i}(f^i_\ell-f^j_\ell)k(f^i_\ell,f^j_\ell)$ (see Appendix~\ref{app:relation}), which drives $f^i_\ell$ away from $f^j_\ell$. 
Our algorithm thus enhances ensemble training, in which the predictions of all particles converge to the MAP and could suffer from overfitting.\\
The relation to ensemble training also exists in weight-space POVI methods \citep{liu2016stein}. However, the RF in those methods is determined by a weight-space kernel. As discussed in Section~\ref{sec:bg}, commonly used weight-space kernels cannot characterize the distance between model predictions in over-parameterized models like BNNs. In contrary, the function-space RF in our method directly accounts for the difference between model predictions, and is far more efficient. We will present empirical evidence in Section~\ref{sec:synthetic-data}. Derivations supporting this remark are included in Appendix~\ref{app:relation}.
\end{rmk}

\subsubsection{Mini-batch Version of the Particle Update}\label{sec:batch-approx}

One shortcoming of the above procedure is that it still needs to iterate over the whole set $\cX$ when calculating the update rule \eqref{eq:svgdAssignment}, which can be inefficient for large $\cX$. 
In this section, we address this issue by (i) replacing \eqref{eq:svgdAssignment} with a sampling-based approximation, eqn \eqref{eq:svgdReplacedAssignment}, and (ii) removing the need to do marginalization in \eqref{eq:svgdReplacedAssignment} by presenting a mini-batch-based update rule.

\paragraph{Sampling-based Approximation to \eqref{eq:svgdAssignment}}
We first improve the efficiency by replacing $\cX$ in \eqref{eq:svgdAssignment} by a finite set of samples $\bx\subset \cX$, i.e., in each iteration, we draw a random subset with $B$ elements, $\bx \sim \mu$, for an \emph{arbitrary} distribution $\mu$ supported on $\cX^B$; we then replace the update rule \eqref{eq:svgdAssignment} with evaluations on $\bx$, i.e. 
\begin{equation}\label{eq:svgdReplacedAssignment}
    \theta_{\ell+1}^i \gets \theta_\ell^i - \epsilon_\ell \left(\frac{\partial f(\bx;\theta^i_\ell)}{\partial \theta^i_\ell}\right)^\top \mbf{v}[f^{i}_\ell(\bx)].
\end{equation}

We will give a theoretical justification of \eqref{eq:svgdReplacedAssignment} in Section~\ref{sec:theoretical-justif}. But to understand it intuitively, for each $\bx$, the vector field $\mbf{v}[f^i_\ell(\bx)]$ optimizes a divergence between the marginals $q(f(\bx))$ and $p(f(\bx)|\bX,\bY)$. Thus we can view \eqref{eq:svgdReplacedAssignment} as a stochastic optimization algorithm solving $\min_q \EE_{\mu(\bx)}(\mc{E}_\bx[q])$, where $\mc{E}_\bx[q]$ is the divergence between the corresponding marginal distributions. When $q$ reaches the minima, we will have \emph{marginal consistency}, namely $q(f(\bx))=p(f(\bx)|\bX,\bY)$ a.s.; marginal consistency is often good enough for practical applications.

\paragraph{Computation of \eqref{eq:svgdReplacedAssignment}} To implement \eqref{eq:svgdReplacedAssignment} we need to compute $\mbf{v}[f_\ell^i(\bx)]$. As shown in Table~\ref{tbl:gfchoices}, it requires the specification of a kernel and access to (the gradient of) the log posterior density, both on the $B$-dimensional space spanned by $f(\bx)$; it also requires the specification of $\mu$. For kernels, any positive definite kernels can be used. In our experiments, we choose the RBF kernel with the median heuristic for bandwidth, as is standard in POVI implementations \citep{liu2016stein}.

The log posterior gradient consists of the gradient of log prior and that of log likelihood. As in this subsection, we assume $\nabla_\bx \log p(f(\bx))$ is known, we only consider the log likelihood. We will  approximate it using mini-batches, i.e. to approximate it with (a scaled version of) $\log p(\by_b|\bx_b,f_\ell^i(\bx_b))$, where $(\bx_b,\by_b)$ is a mini-batch of the training set. 
The requirement to sample $(\bx_b,\by_b)$ is implemented by specifying an appropriate form of $\mu$: 
we define $\mu$ in such a way that a sample from $\mu$ consists of $B'<B$ samples $\bx_b$ from the training set, and $B-B'$ i.i.d. samples from a continuous distribution $\nu$ over $\cX$. This is a valid choice, as stated before. Now we can use the training-set part of samples to compute the log likelihood. 
Finally, the continuous component $\nu$ can be chosen as the KDE of $\bX$, when the test set is identically distributed as the training set, or incorporate distributional assumptions of the test set otherwise. For example, for unsupervised domain adaptation \citep{pmlr-v37-ganin15}, we can use the KDE of the unlabeled test-domain samples.

Summing up, we present a simple yet efficient function-space POVI procedure, as outlined in Algorithm~\ref{alg:alg2}. As we will show empirically in Appendix~\ref{app:parametric-approx}, our algorithm converges robustly in practice.

\vspace{-0.2\baselineskip}
\begin{algorithm}[tbh] 
\caption{Function Space POVI for Bayesian Neural Network}  
\label{alg:alg2}
\begin{algorithmic}[1]
\vspace{-0.2\baselineskip}
\STATE {\bf Input:} (Possibly approximated) function-space prior $p(f(\bx))$ for any finite $\bx$; training set $(\bX, \bY)$; a continuous distribution $\nu$ supported on $\cX$ (e.g. the KDE of $\bX$); a choice of $\mbf{v}$ from Table~\ref{tbl:gfchoices}; batch size $B,B'$;
and a set of initial particles $\{\theta^i_0\}_{i=1}^n$. 
\STATE {\bf Output:} A set of particles $\{{\theta}^i\}_{i=1}^n$, such that $f(\cdot;\theta^i)$ approximates the target distribution.  
\FOR{iteration $\ell$}
\vspace{-0.2\baselineskip}
\STATE Sample a mini-batch $\bx_b, \by_b$ from the training set, and $\tilde{x}_{1\dots B-B'}\overset{i.i.d.}{\sim} \nu$. Denote $\bx=\bx_b\cup \{\tilde{x}_i:i\in[B-B']\}$.
\STATE For each $i\in [n]$, calculate the mini-batch-based function space POVI update $\hat{\mbf{v}}[f^i_\ell(\bx)]$.
\STATE For each $i\in [n]$, calculate $\theta_{\ell+1}^i$ according to \eqref{eq:svgdReplacedAssignment}, with $\mbf{v}$ replaced by $\hat{\mbf{v}}$.
\STATE Set $\ell \gets \ell + 1$.
\ENDFOR
\vspace{-0.1\baselineskip}
\end{algorithmic}
\end{algorithm}
\vspace{-0.3\baselineskip}

\subsubsection{Theoretical Justification}\label{sec:theoretical-justif}

In this section, we will justify all previous approximations, by showing that in the a simplified version of the final algorithm simulates a Wasserstein gradient flow in weight space. We refer readers to \citet{ambrosio2008gradient} for an introduction to Wasserstein gradient flow, and to \citet{liu2017stein,chen2018unified,liu2018accelerated} for its application to variational inference.

Following previous analysis for POVI, we only consider a limiting gradient flow on absolutely continuous distributions in $W_2^2$; heuristically speaking, this is the limit when $n\rightarrow\infty,\epsilon_\ell\rightarrow 0$. We replace \eqref{eq:svgdReplacedAssignment} with its expectation, since the error introduced by stochastic approximation is relatively well-understood. We further restrict our attention to an idealized version of W-SGLD-B \citep{chen2018unified}, namely the GF minimizing $\KL{q}{p}$ on $W_2^2$.\footnote{This is W-SGLD-B without the blob approximation. In \citet{chen2018unified}, blob approximation was introduced to deal with finite particle issues; a similar effort to \citet{carrillo2019blob} is needed to connect the blob approximation to the ``idealized'' GF.} The resulted algorithm corresponds to the following PDE in weight space\footnote{We abuse notation and use $q$ to refer to the a.c. distribution, its density, and corresponding marginal densities.}:

\begin{align}
    \frac{\partial q(\theta,t)}{\partial t} &= -\nabla\cdot\left[
        q \int \nabla_{\theta} \left(
            \log \frac{p(f(\bx;\theta))}{q(f(\bx;\theta))} + \frac{N}{B} \log p(\by_b|\bx_b,f(\cX;\theta))
        \right) \mu(d\bx)
    \right]\nonumber\\
    &= -\nabla\cdot\left[
        q \int \nabla_{\theta} \left(
            \log \frac{p(f(\bx))}{q(f(\bx))} + \log p(\bY|\bX,f)
        \right) \mu(d\bx)
    \right].\label{eq:weight-space-pde}
\end{align}

Our main result is the following, and will be proved in Appendix~\ref{app:gf}.

\begin{prp}\label{prp:gf}
Denote $W_2^2(\Theta)$ as the 2-Wasserstein space of weight-space distributions, where we use the Euclidean metric on weight space. \eqref{eq:weight-space-pde} is the gradient flow on $W_2^2(\Theta)$, which minimizes the following energy functional:
\begin{equation*}
    \mc{E}[q] = -\int \left(
            \log \frac{p(f(\bx))}{q(f(\bx))} + \log p(\bY|\bX,f)
        \right) \mu(d\bx).
\end{equation*}
\end{prp}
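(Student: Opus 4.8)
The plan is to identify \eqref{eq:weight-space-pde} with the abstract $W_2^2(\Theta)$ gradient flow of $\mc{E}$ by a first-variation computation. I would invoke the standard fact (\citet{ambrosio2008gradient}; see also the POVI analyses we cite) that, restricted to absolutely continuous curves, the $W_2$ gradient flow of a functional $\mc{E}$ is the solution of the continuity equation $\partial_t q = -\nabla\cdot(q\,\mbf{v})$ with $\mbf{v} = -\nabla_\theta\,\variation{\mc{E}}{q}$. Hence it suffices to check that $\variation{\mc{E}}{q}(\theta) = -\int\left(\log\frac{p(f(\bx))}{q(f(\bx))} + \log p(\bY|\bX,f)\right)\mu(d\bx)$, since then $-\nabla_\theta\,\variation{\mc{E}}{q}$ is exactly the drift inside the divergence of \eqref{eq:weight-space-pde}. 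Here, consistently with the footnote preceding \eqref{eq:weight-space-pde}, $\mc{E}[q]$ is read with an implicit expectation over $\theta\sim q$, i.e. $\mc{E}[q] = -\int\EE_{\theta\sim q}\left[\log\frac{p(f(\bx;\theta))}{q(f(\bx;\theta))} + \log p(\bY|\bX,f(\cdot;\theta))\right]\mu(d\bx)$, where $q(f(\bx;\theta))$ denotes the value at $f(\bx;\theta)$ of the density of the marginal $q_\bx := \pushfwd{(T_\bx)}{q}$ induced by $T_\bx:\theta\mapsto f(\bx;\theta)$.

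I would split $\mc{E} = A + B + C$, with $A[q] := -\int\EE_{\theta\sim q}\log p(f(\bx;\theta))\,\mu(d\bx)$ and $C[q] := -\int\EE_{\theta\sim q}\log p(\bY|\bX,f(\cdot;\theta))\,\mu(d\bx)$ linear in $q$, so $\variation{A}{q}(\theta) = -\int\log p(f(\bx;\theta))\,\mu(d\bx)$ and $\variation{C}{q}(\theta) = -\int\log p(\bY|\bX,f(\cdot;\theta))\,\mu(d\bx)$ (the latter integral is trivial since the integrand is $\bx$-independent and $\mu$ is a probability measure); and the entropy-type term $B[q] := \int\left(\int q_\bx(z)\log q_\bx(z)\,dz\right)\mu(d\bx)$, in which $q$ appears both as the integrating law and inside the log of its own pushforward density. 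To get $\variation{B}{q}$ I would perturb $q\mapsto q+\varepsilon\chi$ with $\int\chi = 0$; by linearity of the pushforward, $q_\bx\mapsto q_\bx + \varepsilon\chi_\bx$ with $\chi_\bx := \pushfwd{(T_\bx)}{\chi}$, so differentiating $\int(q_\bx+\varepsilon\chi_\bx)\log(q_\bx+\varepsilon\chi_\bx)\,dz$ at $\varepsilon = 0$ gives $\int(\log q_\bx(z)+1)\chi_\bx(z)\,dz$, and the change-of-variables identity $\int g(z)\chi_\bx(z)\,dz = \int g(f(\bx;\theta))\chi(\theta)\,d\theta$ together with Fubini turns this into $\int_\Theta\big(\int_{\cX^B}(\log q_\bx(f(\bx;\theta))+1)\,\mu(d\bx)\big)\chi(\theta)\,d\theta$; the constant $1$ contributes nothing to the first variation (it integrates to zero against mean-zero $\chi$), so $\variation{B}{q}(\theta) = \int\log q_\bx(f(\bx;\theta))\,\mu(d\bx)$. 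Adding the three pieces gives $\variation{\mc{E}}{q}(\theta) = \int\left(-\log p(f(\bx)) + \log q(f(\bx)) - \log p(\bY|\bX,f)\right)\mu(d\bx)$, which is exactly the claimed expression, and substituting $\mbf{v} = -\nabla_\theta\,\variation{\mc{E}}{q}$ into the continuity equation reproduces \eqref{eq:weight-space-pde}.

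The main obstacle is the first-variation computation for $B$: one must carefully keep track of the two roles of $q$ (its law on $\Theta$ and the density of its own marginal $q_\bx$) and push the perturbation through $T_\bx$ — which in general is neither injective nor a diffeomorphism — using only the pushforward change-of-variables formula and never a Jacobian formula for $q_\bx$ itself; the reason the computation stays clean is the cancellation of the constant $1$ against mean-zero perturbations, exactly as in the first variation of differential entropy. A secondary issue, which I would address only at the level of rigor of the POVI works we follow, is justifying that the continuity-equation solution genuinely is a $W_2^2(\Theta)$ gradient flow in the sense of \citet{ambrosio2008gradient}: beyond the formal identity above this requires lower semicontinuity of $\mc{E}$, a $\lambda$-(generalized-geodesic-)convexity or analogous condition, and the identification of $-\nabla_\theta\,\variation{\mc{E}}{q}$ with the minimal-norm element of the Wasserstein subdifferential $\partial\mc{E}(q)$; I would state these as the standing regularity assumptions under which the identification is valid and otherwise keep the argument at the formal PDE level, as the footnotes around \eqref{eq:weight-space-pde} already anticipate.
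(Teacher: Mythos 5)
Your proposal is correct and follows essentially the same route as the paper's proof: the linear (prior and likelihood) terms are handled trivially, and the whole weight lies on the first variation of the pushforward-entropy term, which the paper isolates as Lemma~\ref{lem:marginal-entropy-raw}. Your cancellation of the constant $+1$ against mean-zero perturbations $\chi_\bx$ is exactly the paper's observation that $\int \frac{\rho}{A\rho}A\chi\,dx=0$ (since $\int\frac{\rho}{A\rho}A\nu\,dx=1$ for any probability $\nu$), just computed in function space and pulled back through $T_\bx$ rather than stated in weight space via the operator $A$.
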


In the full-batch setting, $\mc{E}[q] = \EE_\mu \KL{q(f(\bx))}{p(f(\bx)|\bX,\bY)}$. As the 2-Wasserstein space usually contains the true posterior, we have the following marginal consistency result:
\begin{prp}
In the full-batch setting, i.e. $\bX\in \bx$ a.s.$[\mu]$, we have
$$
q^\star(f(\bx))=p(f(\bx)|\bX,\bY)\;\;a.s.\;[\mu],
$$
where $q^\star$ is the minimizer of $\mc{E}$.
\end{prp}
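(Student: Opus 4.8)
The plan is to show that $\mc{E}$ coincides with the expected marginal KL divergence of the statement up to a $q$-independent additive constant, and then combine non-negativity of the KL with the fact that the true posterior attains the infimum of $\mc{E}$.

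\textit{Step 1: rewrite the energy.} In the full-batch setting every sampled $\bx$ contains all training inputs $\bX$, and since the likelihood $p(\bY|\bX,f)=\prod_{i=1}^N p(y_i|x_i,f(x_i))$ depends on $f$ only through $f(\bX)$, which is a sub-vector of $f(\bx)$, the term $\log p(\bY|\bX,f)$ is a genuine function of $f(\bx)$. Writing $\mc{E}[q]=\int\EE_{f\sim q}\!\left[\log\frac{q(f(\bx))}{p(f(\bx))}-\log p(\bY|\bX,f)\right]\mu(d\bx)$ (the $\EE_{f\sim q}$ being the expectation implicit in the free-energy functional), Bayes' rule $p(f(\bx)|\bX,\bY)=p(\bY|\bX,f(\bx))\,p(f(\bx))/p(\bY|\bX)$ lets one rewrite the integrand pointwise as
\begin{equation*}
\log\frac{q(f(\bx))}{p(f(\bx))}-\log p(\bY|\bX,f) \;=\; \log\frac{q(f(\bx))}{p(f(\bx)|\bX,\bY)}-\log p(\bY|\bX).
\end{equation*}
Taking $\EE_{f\sim q}$ then $\EE_{\bx\sim\mu}$ (exchanging the order by Tonelli once the non-negative part is integrable) gives $\mc{E}[q]=\EE_\mu\,\KL{q(f(\bx))}{p(f(\bx)|\bX,\bY)}-\log p(\bY|\bX)$, which is the advertised expected KL plus the constant log-evidence.

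\textit{Step 2: minimizer and consistency.} Because $\KL{\cdot}{\cdot}\ge 0$ with equality iff its two arguments agree, every admissible $q$ satisfies $\mc{E}[q]\ge -\log p(\bY|\bX)$. I would then exhibit a member of the domain attaining this bound, namely the true weight-space posterior $q=p(\theta|\bX,\bY)$ itself, whose function-space marginals are exactly $p(f(\bx)|\bX,\bY)$ so that all KL terms vanish; this is where the remark that ``the $2$-Wasserstein space usually contains the true posterior'' enters, since one needs $p(\theta|\bX,\bY)$ to be absolutely continuous with finite second moment, hence in $W_2^2(\Theta)$. Given this, any minimizer $q^\star$ must also achieve $\mc{E}[q^\star]=-\log p(\bY|\bX)$, i.e.\ $\EE_\mu\,\KL{q^\star(f(\bx))}{p(f(\bx)|\bX,\bY)}=0$; since the integrand is non-negative it vanishes for $\mu$-almost every $\bx$, which is precisely $q^\star(f(\bx))=p(f(\bx)|\bX,\bY)$ a.s.\ $[\mu]$.

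\textit{Main obstacle.} The delicate step is the containment/regularity claim used to produce a minimizer in Step 2: pinning down in what sense the true posterior lies in the domain of the weight-space gradient flow, and that all the marginals $p(f(\bx)|\bX,\bY)$ and the densities appearing in $\mc{E}$ are well-defined and mutually absolutely continuous. The Bayes manipulation, the Tonelli exchange, and the final a.e.-vanishing argument are all routine.
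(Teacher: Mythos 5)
Your proposal is correct and follows essentially the same route as the paper: the paper's (brief) justification is exactly the observation that in the full-batch case $\mc{E}[q]=\EE_\mu\KL{q(f(\bx))}{p(f(\bx)|\bX,\bY)}$ up to the constant log-evidence, that the true posterior lies in the 2-Wasserstein space so the infimum is attained, and that non-negativity of the KL then forces the minimizer's marginals to agree with the posterior marginals $\mu$-a.s. Your Bayes-rule/Tonelli elaboration and the a.e.-vanishing step just spell out what the paper leaves implicit.
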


\paragraph{On the impact of parametric approximation} 
Finally, we discuss the impact of parametric approximation on our algorithm, based on the findings above. Observe \eqref{eq:weight-space-pde} corresponds to the following function-space density evolution:
$$
\frac{\partial q(f,t)}{\partial t} = -\nabla\cdot\left[
        q_t \int \jacobian{f(\cX)}{\theta}\jacobian{f(\cX)}{\theta}^\top \nabla_{f(\cX)} \left(
            \log \frac{p(f(\bx))}{q(f(\bx))} + \log p(\bY|\bX,f)
        \right) \mu(d\bx)
    \right],
$$
when $\jacobian{f(\cX)}{\theta}\jacobian{f(\cX)}{\theta}^\top$ is constant so the equation is well-defined. Heuristically speaking, the result in \citet{jacot2018neural} suggests this is a reasonable assumption for infinitely wide networks. Thus as a heuristic argument we assume it is true. Now we can see that the function-space density evolution is the Wasserstein gradient flow minimizing $\mc{E}$ in \emph{function space}, with the underlying metric defined as the push-forward of the weight-space metric. 

The matrix $\jacobian{f(\cX)}{\theta}\jacobian{f(\cX)}{\theta}^\top$ is a positive definite kernel on function space. Similar to other kernels used in machine learning, it defines a sensible measure on the smoothness of functions. Therefore, \emph{the parametric approximation has a smoothing effect on the function-space updates:} its introduction changes the function-space metric from the naive $L^2$ metric. Crucially, it is for this reason we can safely update $q(f)$ to match a low-dimensional marginal distribution in each step, without changing global properties of $f$ such as smoothness.

\subsection{General Scenarios}

We now relax the assumptions in Section~\ref{sec:motivating-setup} to make our setting more practical in real applications. Below we address the infinity of $\cX$ and the lack of function-space prior gradient in turn.

\paragraph{Infinite Set $\cX$} While we assume $\cX$ is a finite set to make Section~\ref{sec:parametric-approximation} more easily understood, our algorithm works no matter $\cX$ is finite or not: as our algorithm works with mini-batches, when $\cX$ is infinite, we can also sample $\mathbf{x}$ from $\mathcal{X}$ and apply the whole procedure.

\paragraph{Function-Space Prior Gradient}The function space prior for BNN is implicitly defined, and we do not have access to its exact gradient. While we could in principle utilize gradient estimators for implicit models \citep{li2018gradient,shi18spectral}, we opt to use a more scalable work-around in implementation, which is to approximate the prior measure with a Gaussian process (GP). More specifically, given input $\bx$, we draw samples $\tilde{\theta}^1,\dots,\tilde{\theta}^k$ from $p(\theta)$ and construct a multivariate normal distribution that matches the first two moments of $\tilde{p}(f(\bx))=\frac{1}{k}\sum_{j=1}^k\delta(f(\bx)-f(\bx;\tilde{\theta}^j))$. 
We expect this approximation to be accurate for BNNs, because under assumptions like Gaussian prior on weights, as each layer becomes infinitely wide, the prior measure determined by BNNs will converge to a GP with a composite kernel \citep{g.2018gaussian,garriga2018deep,lee2018deep,novak2019bayesian}.

A small batch size is needed to reduce the sample size $k$, as otherwise the covariance estimate in GP will have a high variance. While our procedure works for fairly small $B$ (e.g. $B\ge 2$ for a GP posterior), we choose to use separate batches of samples to estimate the gradient of the log prior and log likelihood. In this way, a much larger batch size could be used for the log likelihood estimate.

\section{Related Work}
\label{sec:related_work}

Our algorithm addresses the problem of over-parameterization, or equivalently,  non-identifiability\footnote{Non-identifiability means there are multiple parameters that correspond to the same model. \emph{Local non-identifiability} means for any neighborhood $U$ of parameter $\theta$, there exists $\theta'\in U$ s.t. $\theta'$ and $\theta$ represent the same model, i.e., they correspond to the same likelihood function. \emph{Global non-identifiability} means there exists such $\theta'$, but not necessarily in all neighborhoods.}. A classical idea addressing non-identifiability is to introduce alternative metrics in the weight space, so that parameters corresponding to similar statistical models are closer under that metric. 
The typical choice is the Fisher information metric, which has been utilized to improve Markov Chain Monte Carlo methods \citep{girolami2011riemann}, variational inference \citep{pmlr-v80-zhang18l} and gradient descent~\citep{amari1997neural}. 
While such methods explore locally non-identifiable parameter regions more rapidly than their weight-space counterparts~\citep{amari2016information}, they still suffer from global non-identifiability, which frequently occurs in models like Bayesian neural networks. 
Our work takes one step further: by defining metrics in the function space, we address local and global non-identifiability simultaneously.

Closely related to our work is the variational implicit process \citep[VIP;][]{ma2018variational}, which shares the idea of function space inference. VIP addresses inference and model learning simultaneously; however, their inference procedure did not address the challenge of inference in complex models: the inference algorithm in VIP draws $S$ prior functions from $p(f)$, and fits a Bayesian linear regression model using these functions as features. As $S$ is limited by the computational budget, such an approximation family will have problem scaling to more complex models. We present comparisons to VIP in Appendix~\ref{app:uci-other-methods}. 
Approximate inference for BNN is a rich field. Under the VI framework, apart from the implicit VI methods mentioned in Section~1, \citet{louizos2017multiplicative} proposed a hierarchical variational model, which approximates $p(\theta|\bx,\by)$ with $q(\theta) = \int q(\theta|\bz)q(\bz)d\bz$, where $\bz$ represents layer-wise multiplicative noise, parameterized by normalizing flows. While this approach improves upon plain single-level variational models, its flexibility is limited by a oversimplified choice of $q(\theta|\bz)$. Such a trade-off between approximation quality and computational efficiency is inevitable for weight-space VI procedures. 
Another line of work use stochastic gradient Markov Chain Monte Carlo (SG-MCMC) for approximate inference~\citep{li2016preconditioned, chen2014stochastic}. While SG-MCMC converges to the true posterior asymptotically, within finite time it produces correlated samples, and has been shown to be less particle-efficient than the deterministic POVI procedures \citep{liu2016stein, chen2018unified}. Finally, there are other computationally efficient approaches to uncertainty estimation, e.g. Monte-Carlo dropout \citep{gal2016dropout}, batch normalization \citep{pmlr-v80-hron18a}, and efficient implementations of factorized Gaussian approximation \citep[e.g.,][]{blundell2015weight,pmlr-v80-zhang18l,khan2018fast}.

\section{Evaluation}
\label{sec:experiments}

In this section, we evaluate our method on a variety of tasks. First, we present a qualitative evaluation on a synthetic regression dataset. We then evaluate the predictive performance on several standard regression and classification datasets. Finally, we assess the uncertainty quality of our method on two tasks: defense against adversarial attacks, and contextual bandits.

We compare with strong baselines. For our method, we only present results implemented with SVGD for brevity (abbreviated as ``f-SVGD''); results using other POVI methods are similar, and can be found in Appendix~\ref{app:povi-synth} and \ref{app:uci-povi}. Unless otherwise stated, baseline results are directly taken from the original papers, and comparisons are carried out under the same settings.

Code for the experiments is available at \url{https://github.com/thu-ml/fpovi}. The implementation is based on ZhuSuan \citep{zhusuan2017}.

\subsection{Synthetic Data and the Over-parameterization Problem}\label{sec:synthetic-data}

To evaluate the approximation quality of our method qualitatively, and to demonstrate the curse-of-dimensionality problem encountered by weight space POVI methods, we first experiment on a simulated dataset. We follow the simulation setup in \citet{sun2017learning}: for input, we randomly generate 12 data points from 
Uniform$(0, 0.6)$ and 8 from Uniform$(0.8, 1)$. The output $y_n$ for input $x_n$ is modeled as $y_n = x_n + \epsilon_n + \sin (4 (x_n + \epsilon_n)) + \sin (13 (x_n + \epsilon_n))$, where $\epsilon_n\sim \mathcal{N}(0, 0.0009)$. The model is a feed-forward network with 2 hidden layers and ReLU activation; each hidden layer has $50$ units. We use $50$ particles for weight space SVGD and our method, and use Hamiltonian Monte Carlo (HMC) to approximate the ground truth posterior. We plot $95\%$ credible intervals for prediction and mean estimate, representing epistemic and aleatoric\footnote{predictive uncertainty due to the noise in the data generating process.} uncertainties respectively. 

\begin{figure}[t]\vspace{-.3cm}
    \centering
    \includegraphics[width=\linewidth]{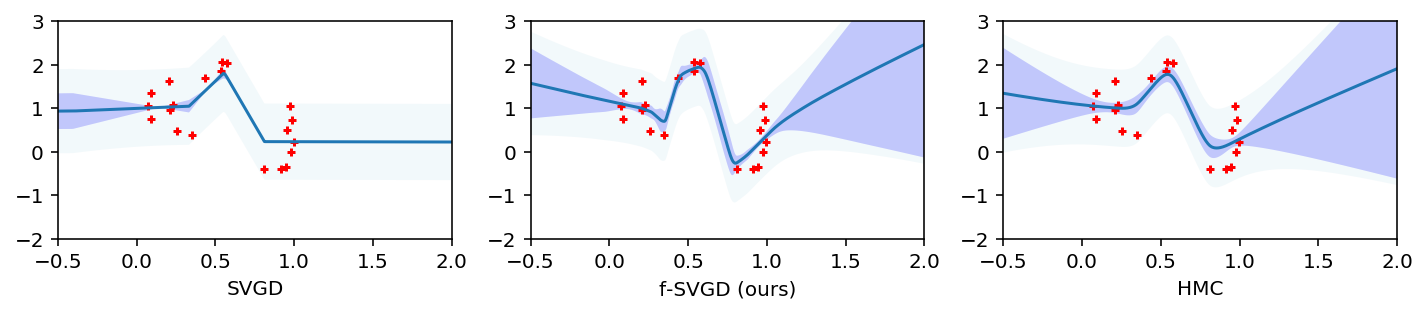}\vspace{-.3cm}
    \caption{Approximate posterior obtained by different methods. Dots indicate observations, solid line indicates predicted mean, light shaded area corresponds to the predictive credible interval, and dark shaded area corresponds to the credible interval for mean estimate.}
    \label{fig:synth_main}\end{figure}

Fig.~\ref{fig:synth_main} shows the results. We can see our method provides a reasonable approximation for epistemic uncertainty, roughly consistent with HMC; on the other hand, weight-space POVI methods severely underestimate uncertainty. Furthermore, we found that such pathology exists in all weight-space POVI methods, and amplifies as model complexity increases; eventually, all weight-space methods yield degenerated posteriors concentrating on a single function. We thus conjecture it is caused by the over-parameterization problem in weight space. See Appendix~\ref{app:povi-synth} for related experiments. 

\subsection{Predictive Performance}

Following previous work on Bayesian neural networks \citep[e.g.][]{hernandez2015probabilistic}, we evaluate the predictive performance of our method on two sets of real-world datasets: a number of UCI datasets for real-valued regression, and the MNIST dataset for classification.

\subsubsection{UCI Regression Dataset}\label{sec:uci-main}

On the UCI datasets, our experiment setup is close to \cite{hernandez2015probabilistic}. The model is a single-layer neural network with ReLU activation and 50 hidden units, except for a larger dataset, Protein, in which we use 100 units. The only difference to \cite{hernandez2015probabilistic} is that we impose an inverse-Gamma prior on the observation noise, which is also used in e.g. \citet{shi2018kernel} and \citet{liu2016stein}. Detailed experiment setup are included in Appendix~\ref{app:uci-setup}, and full data for our method in Appendix~\ref{app:uci-povi}.

We compare with the original weight-space Stein variational gradient descent (SVGD), and two strong baselines in BNN inference: kernel implicit variational inference \citep[KIVI,][]{shi2018kernel}, and variational dropout with $\alpha$-divergences \citep{li2017dropout}. The results are summarized in Fig.~\ref{fig:uci-rmse-ll}. We can see that our method has superior performance in almost all datasets.

In addition, we compare with another two state-of-the-art methods for BNN inference: multiplicative normalizing flows and Monte-Carlo batch normalization. As the experiment setup is slightly different following \citep{azizpour2018bayesian}, we report the results in Appendix~\ref{app:uci-other-methods}. In most cases, our method also compares favorably to these baselines.

\begin{figure}[t!]\vspace{-.2cm}
    \centering

\begin{subfigure}{.5\textwidth}
  \centering
  \includegraphics[width=\linewidth]{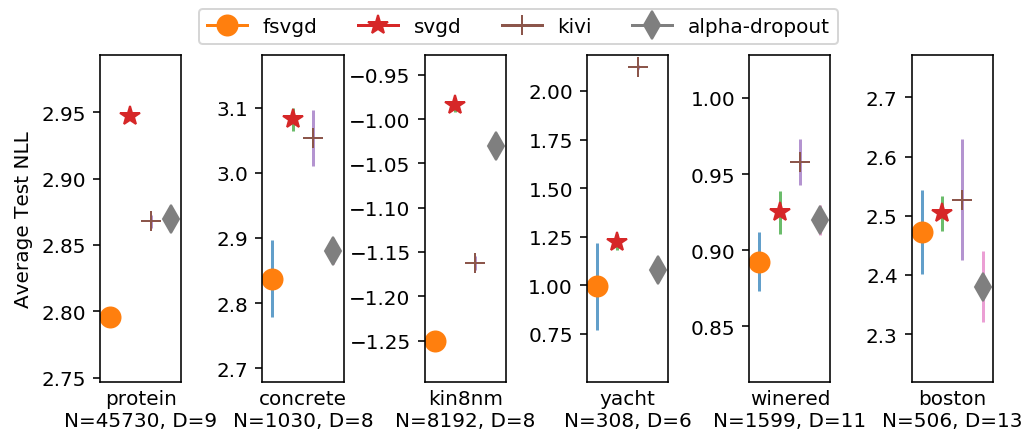}
\end{subfigure}%
\begin{subfigure}{.5\textwidth}
  \centering
  \includegraphics[width=\linewidth]{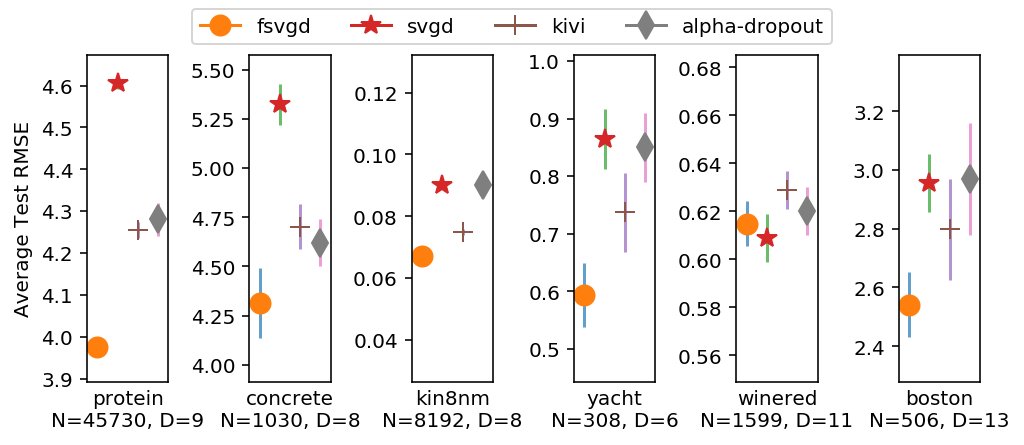}
\end{subfigure}\vspace{-.3cm}
    \caption{Average test RMSE and predictive negative log-likelihood, on UCI regression datasets. Smaller (lower) is better. Best viewed in color.}\label{fig:uci-rmse-ll}\vspace{-.2cm}
\end{figure}

\subsubsection{MNIST Classification Dataset}

Following previous work such as \citet{blundell2015weight}, we report results on the MNIST handwriting digit dataset.We use a feed-forward network with two hidden layers, 400 units in each layer, and ReLU activation, and place a standard normal prior on the network weights. We choose this setting so the results are comparable with previous work. 
We compare our results with vanilla SGD, Bayes-by-Backprop (\citet{blundell2015weight}), and KIVI. For our method, we use a mini-batch size of 100, learning rate of $2\times 10^{-4}$ and train for 1,000 epochs. We hold out the last 10,000 examples in training set for model selection. The results are summarized in Table \ref{tbl:mnist}. We can see that our method outperform all baselines.

\begin{table}[t]\vspace{-.3cm}
\centering
\caption{Test error on the MNIST dataset. \textbf{Boldface} indicates the best result.}\vspace{-.2cm}\label{tbl:mnist}
\begin{tabular}{ccccc}\small
\textbf{Method}     & BBB (Gaussian Prior) & BBB (Scale Mixture Prior) & KIVI   & f-SVGD          \\\hline
\textbf{Test Error} & 1.82\%                       & 1.36\%                 & 1.29\% & \textbf{1.21\%}
\end{tabular}

\end{table}

\begin{figure}[htbp]\vspace{-.3cm}
\centering
\includegraphics[width=0.7\textwidth]{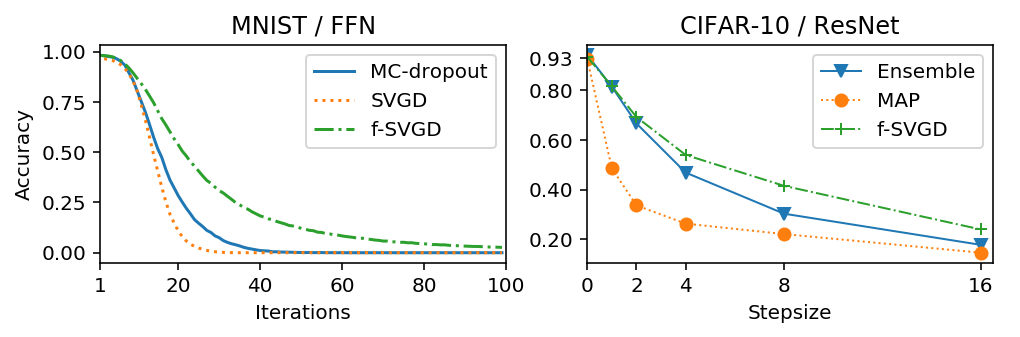}\vspace{-.2cm}
\caption{Accuracy on adversarial examples.}
\label{fig:mnistAtk}\vspace{-.3cm}
\end{figure}

\subsection{Robustness against Adversarial Examples}

Deep networks are vulnerable to  adversarial noise, with many efficient algorithms to craft such noise~\citep[cf. e.g.][]{dong2018boosting}, while defending against such noise is till a challenge~\citep[e.g.][]{Pang-Tianyu-ICML2018, NIPS2018_8016}. It is hypothesized that Bayesian models are more robust against adversarial examples due to their ability to handle epistemic uncertainty \citep{rawat2017adversarial, smith2018understanding}. This hypothesis is supported by \citet{li2017dropout}, in a relatively easier setup with feed-forward networks on MNIST; to our knowledge, no results are reported using more flexible approximate inference techniques. In this section, we evaluate the robustness of our method on a setting compatible to previous work, as well as a more realistic setting with ResNet-32 on the CIFAR-10 dataset. We briefly introduce the experiment setup here; detailed settings are included in Appendix~\ref{app:adv}. 

On the MNIST dataset, we follow the setup in \citet{li2017dropout}, and experiment with a feed-forward network. We use the iterative fast gradient sign method (I-FGSM) to construct targeted white-box attack samples. In each iteration, we limit the $\ell^\infty$ norm of the perturbation to $0.01$ (pixel values are normalized to the range of $[0,1]$). We compare our method with vanilla SVGD, and MC Dropout.

On the CIFAR-10 dataset, we use the ResNet-32 architecture \citep{he2016deep}. As dropout requires modification of the model architecture, we only compare with the single MAP estimate, and an ensemble model. We use 8 particles for our method and the ensemble baseline. We use the FGSM method to construct white-box untargeted attack samples.

Fig.~\ref{fig:mnistAtk} shows the results for both experiments. We can see our method improves robustness significantly, both when compared to previous approximate BNN models, and baselines in the more realistic setting.

\subsection{Improved Exploration in Contextual Bandit}

Finally, we evaluate the approximation quality of our method on several contextual bandit problems, Contextual bandit is a standard reinforcement learning problem. It is an arguably harder task than supervised learning for BNN approximation methods, as it requires the agent to balance between exploitation and exploration, and decisions based on poorly estimated uncertainty will lead to catastrophic performance through a feedback loop \citep{riquelme2018deep}. Problem background and experiment details are presented in Appendix~\ref{app:dcb}.

We consider the Thompson sampling algorithm with Bayesian neural networks. We use a feed-forward network with 2 hidden layers and 100 ReLU units in each layer. Baselines include other approximate inference methods including Bayes-by-Backprop and vanilla SVGD, as well as other uncertainty estimation procedures including Gaussian process and frequentist bootstrap. We use the mushroom and wheel bandits from \citet{riquelme2018deep}.

The cumulative regret is summarized in Table~\ref{tbl:regret}. We can see that our method provides competitive performance compared to the baselines, and outperforming all baselines by a large margin in the wheel bandit, in which high-quality uncertainty estimate is especially needed.

\begin{table}[htb]\centering
\caption{Cumulative regret in different bandits. Results are averaged over $10$ trials.}\label{tbl:regret}
\begin{tabular}{c|cccc}
\hline
         & BBB             & GP              & Bootstrap               & f-SVGD                  \\ \hline
Mushroom & $19.15\pm 5.98$ & $16.75\pm 1.63$ & $\mathbf{2.71\pm 0.22}$ & $4.39\pm 0.39$          \\
Wheel    & $55.77\pm 8.29$ & $60.80\pm 4.40$ & $42.16\pm 7.80$         & $\mathbf{7.54\pm 0.41}$ \\ \hline
\end{tabular}
\end{table}

\section{Conclusion}

We present a flexible approximate inference method for Bayesian regression models, building upon particle-optimization based variational inference procedures. The newly proposed method performs POVI on function spaces, which is scalable and easy to implement and overcomes the degeneracy problem in direct applications of POVI procedures. Extensive experiments demonstrate the effectiveness of our proposal. 

\section*{Acknowledgements}

The work was supported by the National NSF of China (Nos. 61621136008, 61620106010), the National Key Research and Development Program of China (No. 2017YFA0700900), Beijing Natural Science Foundation (No. L172037), Tsinghua Tiangong Institute for Intelligent Computing, the NVIDIA NVAIL Program, a project from Siemens, and a project from NEC.

We thank Chang Liu for discussion related to Section~\ref{sec:theoretical-justif}.

\bibliography{main}
\bibliographystyle{iclr2019_conference}

\newpage
\appendix

\section{Proof for Proposition~\ref{prp:gf}}\label{app:gf}

Define
\begin{align*}
    E_\bx[q(f)] &= \EE_{q(f)}\left[\frac{N}{B'} \log p(\by_b|\bx_b,f) + \log p(f(\bx))\right],\\
    H_\bx[q(f)] &= -\EE_{q(f)} \log q(f(\bx)).
\end{align*}
So $\mc{E}[q(f)] = \EE_{\mu} (E_\bx[q]+H_\bx[q]).$

$\nabla\variation{\left(\EE_\mu E_\bx[q] \right)}{q}$ is derived in \citep[][Chapter 11]{ambrosio2008gradient}, and 
it suffices to show
\begin{equation}\label{eq:marginal-energy}
    \nabla \variation{H_\bx[\pushfwd{\mathrm{eval}}{q_\theta}]}{q_\theta} = -\nabla_\theta \log q(f(\bx;\theta)),
\end{equation}
where $\mathrm{eval}(\theta) := f(\cX;\theta)$. The above follows from the definition of first variation: see Lemma~\ref{lem:marginal-entropy-raw}.

\subsection{Supporting Results}

\begin{defi}\label{def:first-var} \citep[7.12]{santambrogio2015optimal} 
Given a functional 
$\mathcal{F}: \mathcal{P} (\Omega) \rightarrow \mathbb{R} \cup \{ + \infty
\}$, $\rho \in \mathcal{P} (\Omega)^{\infty}$ is said to be regular for
$\mathcal{F}$ if for any $\rho' \in \mathcal{P} (\Omega) \cap L_c$ and
$\varepsilon \in (0, 1)$, $\mathcal{F} [(1 - \varepsilon) \rho + \varepsilon
\rho'] < + \infty$. If $\rho$ is regular, we call $(\delta \mathcal{F}/ \delta
\rho) (\rho)$, the \textbf{first variation}, any measurable function such that
\[ \frac{d}{d \varepsilon} \mathcal{F} (\rho + \varepsilon \chi) = \int
   \frac{\delta \mathcal{F}}{\delta \rho} (\rho) d \chi \]
for all smooth $\chi = \rho' - \rho$, where $\rho' \in \mathcal{P} (\Omega)
\cap L_c$.
\end{defi}

\newcommand{\frho}{[f_\#\rho]}
\newcommand{\fnu}{[f_\#\nu]}

\begin{lem}\label{lem:marginal-entropy-raw}

Let $\rho(x)$ be an arbitrary smooth distribution on $\mb{R}^n$, $f:\mb{R}^n\rightarrow\mb{R}^m$ be a smooth map such that $f_{\#}\rho$ is absolutely continuous; $A\rho(x):=\frho(f(x))$ be the marginal density of $f(x)$ (given $\rho$), evaluated at $f(x)$. Then the first variation of
$$
\mc{H}_m(\nu) := \begin{cases}
    -\int \rho(x) \log A\rho(x) dx, & \nu = \rho\cdot  \mu_{Lebesgue} \\
    +\infty, & o.t.
    \end{cases}
$$
is
$$
\left[\frac{\delta \mc{H}_m(\rho)}{\delta \rho}\right](x) = - \log A\rho(x).
$$
\end{lem}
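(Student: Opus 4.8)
The goal is to compute the first variation (in the sense of Definition~\ref{def:first-var}) of the marginal-entropy functional $\mc{H}_m$, which is defined only on absolutely continuous $\nu = \rho\cdot\mu_{\mathrm{Lebesgue}}$ and is $+\infty$ otherwise. The plan is to differentiate $\mc{H}_m(\rho + \varepsilon\chi)$ at $\varepsilon = 0$ for an admissible perturbation $\chi = \rho' - \rho$ (with $\rho'\in\mc{P}(\mb{R}^n)\cap L_c$ and $\int\chi\,dx = 0$), and to show the derivative equals $\int (-\log A\rho(x))\,d\chi(x)$, which identifies $-\log A\rho$ as the first variation.

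First I would unwind the definition of $A$: for a signed measure $\sigma$ on $\mb{R}^n$ that is absolutely continuous, write $A\sigma(x) := [f_\#\sigma](f(x))$, i.e. $A$ is linear in its argument (push-forward followed by evaluation of the resulting density at $f(x)$). Then $\mc{H}_m(\rho+\varepsilon\chi) = -\int (\rho + \varepsilon\chi)(x)\,\log A(\rho+\varepsilon\chi)(x)\,dx$. Differentiating under the integral sign in $\varepsilon$ produces two terms:
\begin{align*}
\frac{d}{d\varepsilon}\Big|_{\varepsilon=0}\mc{H}_m(\rho+\varepsilon\chi)
&= -\int \chi(x)\,\log A\rho(x)\,dx \;-\; \int \rho(x)\,\frac{A\chi(x)}{A\rho(x)}\,dx.
\end{align*}
The first term is exactly $\int(-\log A\rho)\,d\chi$, the claimed answer, so the whole content of the lemma is that the second term vanishes. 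To see this, note $\int \rho(x)\,\frac{A\chi(x)}{A\rho(x)}\,dx = \int \frac{[f_\#\chi](f(x))}{[f_\#\rho](f(x))}\,\rho(x)\,dx$; pushing this integral forward through $f$ (i.e. using the change-of-variables / disintegration identity $\int g(f(x))\rho(x)\,dx = \int g(y)\,[f_\#\rho](y)\,dy$) turns it into $\int \frac{[f_\#\chi](y)}{[f_\#\rho](y)}\,[f_\#\rho](y)\,dy = \int [f_\#\chi](y)\,dy = [f_\#\chi](\mb{R}^m) = \chi(\mb{R}^n) = 0$, since $\chi = \rho'-\rho$ is a difference of probability measures.

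The main obstacle is not the formal computation but justifying the interchange of $\frac{d}{d\varepsilon}$ with the integral and ensuring all quantities are well-defined along the path $\rho+\varepsilon\chi$: one needs $A(\rho+\varepsilon\chi)$ to stay positive (or at least that the integrand stays integrable) for $\varepsilon$ in a neighborhood of $0$, which is where the regularity hypothesis on $\rho$ and the compact-support / boundedness conditions $\rho'\in\mc{P}(\mb{R}^n)\cap L_c$ in Definition~\ref{def:first-var} enter; the condition that $f_\#\rho$ is absolutely continuous is precisely what makes $A\rho$ a bona fide density so that $\log A\rho$ is defined a.e. I would handle this by restricting to smooth $\chi$ with bounded density as in the definition, noting $A\chi$ is then bounded and $A\rho$ is bounded below on the relevant support, so dominated convergence applies. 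Finally I would remark that applying this with $\rho = q_\theta$, $f = \mathrm{eval}$, $m = $ (dimension of $f(\bx)$) — composing the evaluation on $\cX$ with the coordinate projection onto $\bx$ — yields \eqref{eq:marginal-energy}, which together with the standard computation of $\nabla\variation{\EE_\mu E_\bx[q]}{q}$ from \citet[Chapter 11]{ambrosio2008gradient} completes the proof of Proposition~\ref{prp:gf}.
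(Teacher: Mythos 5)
Your proposal is correct and follows essentially the same route as the paper's proof: differentiate $\mc{H}_m(\rho+\varepsilon\chi)$ under the integral, use linearity of $\chi\mapsto A\chi$, and kill the term $\int \frac{\rho}{A\rho}A\chi\,dx$ via the change-of-variables identity for the push-forward (the paper evaluates $\int\frac{\rho}{A\rho}A\nu\,dx=1$ for each probability component of $\chi=\rho'-\rho$ rather than pushing forward the signed measure $\chi$ directly, which is the same computation). Your added remarks on dominated convergence and on how the lemma plugs into Proposition~\ref{prp:gf} are consistent with, and slightly more explicit than, the paper's treatment.
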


\begin{proof}
\begin{align}
  &\left[ \frac{d}{d \varepsilon} \mathcal{H}_m (\rho + \varepsilon \chi)_{}
  \right]_{\varepsilon = 0} \nonumber\\
  =& - \int \frac{d}{d \varepsilon} \{
  (\rho + \varepsilon \chi) (x) \log (A (\rho + \varepsilon \chi) (x))
   \} d x \label{eq:swap}\\
  =& - \int \left[ \chi \log (A (\rho + \varepsilon \chi)) +
  \frac{(\rho + \varepsilon \chi)}{A (\rho + \varepsilon \chi)} A \chi
  \right]_{\varepsilon = 0} d x \label{eq:lin}\\
  =& - \int \left(\chi \log (A \rho) + \frac{\rho}{A \rho} A \chi \right)d x. \nonumber
\end{align}
where \eqref{eq:lin} holds since the map $\epsilon\mapsto A(\rho+\epsilon\chi)(x)$ is linear. 
To calculate $\int \frac{\rho}{A \rho} A \chi d x$ it suffices to consider 
$\int \frac{\rho}{A \rho} A \nu d x$ for arbitrary $\nu\in L_c$. As
\begin{align*}
    \int \frac{\rho}{A \rho} A \nu d x &=\mathbb{E}_{\rho (x)} \left[ \frac{\fnu
   (f (x))}{\frho (f (x))} \right] \\
   &=\mathbb{E}_{f_\#\rho(f (x))} \left[ \frac{\fnu (f
   (x))}{\frho (f (x))} \right] \\
   &= \int \frac{\fnu (f (x))}{\frho (f (x))} \frho (f
   (x)) d x = 1,
\end{align*}
$\int \frac{\rho}{A\rho} A\chi dx=0$, and
\[ \left[ \frac{\delta \mathcal{H}_m}{\delta \rho} (\rho) \right] (x) = - \log A\rho (x) . \]
\end{proof}
\newpage

\section{Experiment Details and Additional Results}

\subsection{Synthetic Data}\label{app:povi-synth}

\paragraph{Comparison with Other POVI Methods} We present posterior approximations obtained by weight-space and function-space versions of other POVI methods. The simulation setup is the same as in Section~\ref{sec:synthetic-data}. As shown in Figure~\ref{fig:synth_supp}, function-space methods provide improvement in all cases, avoid the degenerate behavior of weight-space methods.

\paragraph{Experiments with Increasing Model Complexity} To obtain a better understanding of the degenerate behavior of weight-space POVI methods, we repeat the experiment with increasingly complex models. Specifically, we repeat the experiment while varying the number of hidden units in each layer from 5 to 100. Other settings are the same as Section~\ref{sec:synthetic-data}. The posteriors are shown in~Figure \ref{fig:wvsf}. We can see that weight space methods provide accurate posterior estimates when the number of weights is small, and degenerate gradually as model complexity increases, eventually all particles collapse into a single function. On the contrary, function space methods produce stable approximations all the time. 

\begin{figure}[htbp]
    \centering
    \includegraphics[width=0.75\linewidth]{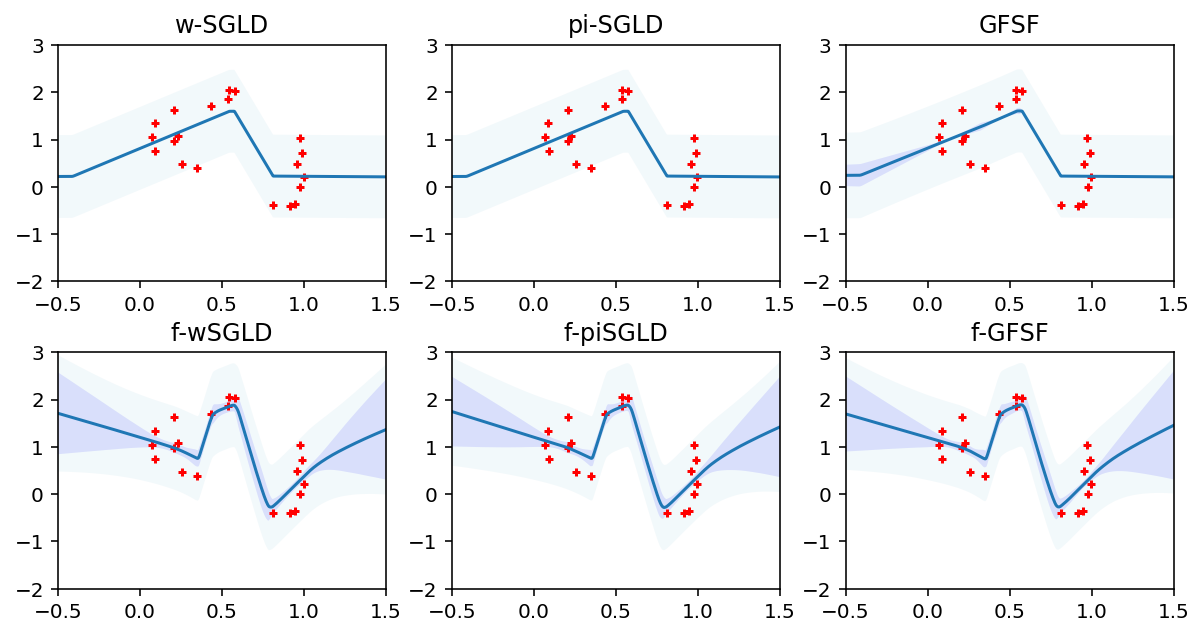}
    \caption{Posterior approximations obtained by weight space (up) and function space (down) variants of other POVI methods.}
    \label{fig:synth_supp}
\end{figure}

\begin{figure}[htbp]
    \centering
    \includegraphics[width=\linewidth]{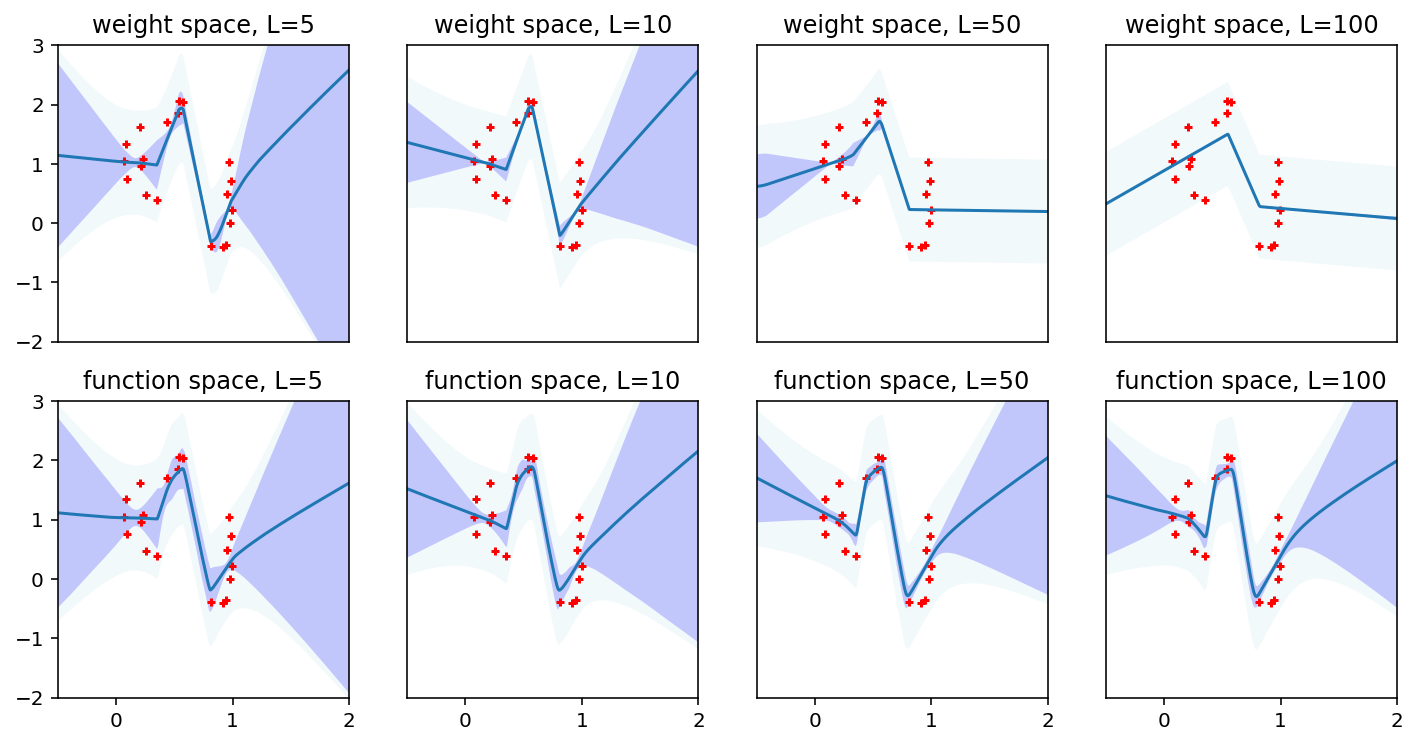}
    \caption{Posterior approximations on increasingly complex models. $L$ denotes the number of hidden units in each layer. We can see a clear trend of degeneration for weight-space method.}
    \label{fig:wvsf}
\end{figure}

\subsection{UCI Datasets}\label{app:uci}

\subsubsection{Experiment Details in \ref{sec:uci-main}}\label{app:uci-setup}

For our method in all datasets, we use the AdaM optimizer with a learning rate of 0.004. For datasets with fewer than $1000$ samples, we use a batch size of 100 and train for 500 epochs. For the larger datasets, we set the batch size to 1000, and train for 3000 epochs. We use a 90-10 random train-test split, repeated for 20 times, except for Protein in which we use 5 replicas. For our method and weight-space POVI methods, we use 20 particles. We use the RBF kernel, with the bandwidth chosen by the median trick \citep{liu2016stein}. 

For our method, we approximate the function-space prior with GP, and separate the mini-batch used for prior gradient and other parts in $\mbf{v}$, as discussed in the main text. We construct the multivariate normal prior approximation with 40 draws and a batch size of 4.

\subsubsection{Full Results with Different POVI Procedures}\label{app:uci-povi}

In this section, we provide full results on UCI datasets, using weight-space and function-space versions of different POVI procedures. The experiment setup is the same as Section~\ref{sec:uci-main}. The predictive RMSE and NLL are presented in Table~\ref{tbl:uci-povi-rmse} and \ref{tbl:uci-povi-nll}, respectively. We can see that for all POVI methods, function space variants provide substantial improvement over their weight-space counterparts.

\begin{table}[htbp]\centering\scriptsize\caption{Average test RMSE on UCI datasets. Bold indicates statistically significant best results  ($p<0.05$ with t-test).}\label{tbl:uci-povi-rmse}
\begin{tabular}{l|ccc|ccc}
    \hline
    \multicolumn{1}{c|}{\multirow{2}{*}{Dataset}} & \multicolumn{3}{|c|}{Weight Space} & \multicolumn{3}{|c}{Function Space (Ours)}
    \\
    \multicolumn{1}{c|}{} & SVGD & w-SGLD & pi-SGLD & SVGD & w-SGLD & pi-SGLD \\ \cline{1-7}

Boston &$2.96 \pm 0.10$ &$\mathbf{2.84\pm 0.15}$ &$\mathbf{2.84\pm 0.15}$ &$\mathbf{2.54\pm 0.11}$ &$\mathbf{2.51\pm 0.11}$ &$\mathbf{2.54\pm 0.11}$ \\
Concrete &$5.32 \pm 0.10$ &$5.51 \pm 0.10$ &$5.49 \pm 0.10$ &$\mathbf{4.31\pm 0.18}$ &$\mathbf{4.36\pm 0.18}$ &$\mathbf{4.33\pm 0.17}$ \\
Kin8nm &$0.09 \pm 0.00$ &$0.07 \pm 0.00$ &$0.07 \pm 0.00$ &$\mathbf{0.07\pm 0.00}$ &$0.07 \pm 0.00$ &$0.07 \pm 0.00$ \\
Naval &$0.00 \pm 0.00$ &$\mathbf{0.00\pm 0.00}$ &$\mathbf{0.00\pm 0.00}$ &$0.00 \pm 0.00$ &$\mathbf{0.00\pm 0.00}$ &$\mathbf{0.00\pm 0.00}$ \\
Power &$3.94 \pm 0.03$ &$3.97 \pm 0.03$ &$3.97 \pm 0.03$ &$\mathbf{3.78\pm 0.03}$ &$\mathbf{3.81\pm 0.03}$ &$\mathbf{3.81\pm 0.03}$ \\
Protein &$4.61 \pm 0.01$ &$4.40 \pm 0.02$ &$4.40 \pm 0.02$ &$\mathbf{3.98\pm 0.02}$ &$\mathbf{4.01\pm 0.02}$ &$\mathbf{4.01\pm 0.02}$ \\
Winered &$\mathbf{0.61\pm 0.01}$ &$\mathbf{0.63\pm 0.01}$ &$\mathbf{0.63\pm 0.01}$ &$\mathbf{0.61\pm 0.01}$ &$\mathbf{0.61\pm 0.01}$ &$\mathbf{0.61\pm 0.01}$ \\
Yacht &$0.86 \pm 0.05$ &$0.95 \pm 0.07$ &$0.93 \pm 0.07$ &$\mathbf{0.59\pm 0.06}$ &$\mathbf{0.59\pm 0.06}$ &$\mathbf{0.58\pm 0.06}$ \\
 \hline \end{tabular}
    \end{table}
\begin{table}[htbp]\centering\scriptsize\caption{Average test NLL on UCI datasets. Bold indicates best results.}\label{tbl:uci-povi-nll}
\begin{tabular}{l|ccc|ccc}
    \hline
    \multicolumn{1}{c|}{\multirow{2}{*}{Dataset}} & \multicolumn{3}{|c|}{Weight Space} & \multicolumn{3}{|c}{Function Space (Ours)}
    \\
    \multicolumn{1}{c|}{} & SVGD & w-SGLD & pi-SGLD & SVGD & w-SGLD & pi-SGLD \\ \cline{1-7}
Boston &$\mathbf{2.50\pm 0.03}$ &$\mathbf{2.52\pm 0.07}$ &$\mathbf{2.52\pm 0.07}$ &$\mathbf{2.47\pm 0.07}$ &$\mathbf{2.53\pm 0.09}$ &$\mathbf{2.55\pm 0.09}$ \\
Concrete &$3.08 \pm 0.02$ &$3.15 \pm 0.03$ &$3.15 \pm 0.03$ &$\mathbf{2.84\pm 0.06}$ &$\mathbf{2.91\pm 0.07}$ &$\mathbf{2.93\pm 0.07}$ \\
Kin8nm &$-0.98 \pm 0.01$ &$-1.20 \pm 0.01$ &$-1.20 \pm 0.01$ &$\mathbf{-1.25\pm 0.00}$ &$\mathbf{-1.26\pm 0.00}$ &$\mathbf{-1.26\pm 0.00}$ \\
Naval &$-4.09 \pm 0.01$ &$\mathbf{-6.33\pm 0.02}$ &$\mathbf{-6.39\pm 0.02}$ &$-5.56 \pm 0.03$ &$-6.22 \pm 0.02$ &$-6.25 \pm 0.02$ \\
Power &$2.79 \pm 0.01$ &$2.80 \pm 0.01$ &$2.80 \pm 0.01$ &$\mathbf{2.75\pm 0.01}$ &$\mathbf{2.76\pm 0.01}$ &$\mathbf{2.76\pm 0.01}$ \\
Protein &$2.95 \pm 0.00$ &$2.90 \pm 0.00$ &$2.90 \pm 0.00$ &$\mathbf{2.80\pm 0.00}$ &$\mathbf{2.80\pm 0.00}$ &$\mathbf{2.80\pm 0.00}$ \\
Winered &$\mathbf{0.93\pm 0.01}$ &$0.97 \pm 0.01$ &$0.97 \pm 0.01$ &$\mathbf{0.89\pm 0.02}$ &$\mathbf{0.92\pm 0.02}$ &$\mathbf{0.92\pm 0.02}$ \\
Yacht &$\mathbf{1.23\pm 0.04}$ &$\mathbf{1.39\pm 0.07}$ &$\mathbf{1.37\pm 0.07}$ &$\mathbf{1.00\pm 0.22}$ &$\mathbf{1.08\pm 0.28}$ &$\mathbf{1.09\pm 0.29}$ \\
 \hline \end{tabular}
    \end{table}

\subsubsection{Comparison with Other Methods}\label{app:uci-other-methods}

In this section, we provide comparison to a few other strong baselines in BNN inference.

\paragraph{MNF and MCBN} We first present comparisons with multiplicative normalizing flow \citep[MNF;][]{louizos2017multiplicative} and Monte-Carlo batch normalization \citep[MCBN;][]{azizpour2018bayesian}. We make the experiment setups consistent with \citet{azizpour2018bayesian}, and cite the baseline results from their work. The model is a BNN with 2 hidden layer, each with 50 units for the smaller datasets, and 100 for the protein dataset. We use $10\%$ data for test, and hold out an additional fraction of $10\%$ data to determine the optimal number of epochs. The hyperparameters and optimization scheme for our model is the same as in Section~\ref{sec:uci-main}. The results are presented in Table~\ref{tbl:mcbn-mnf}. We can see that in most cases, our algorithm performs better than the baselines.

\paragraph{VIP} We present comparison to the variational implicit process \citep[VIP;][]{ma2018variational}. We follow the experiment setup in their work, and cite results from it. The model is a BNN with 2 hidden layer, each with 10 units. For our method, we follow the setup in Section~\ref{sec:uci-main}. We note that this is not entirely a fair comparison, as the method in \citet{ma2018variational} also enables model hyper-parameter learning, while for our method we keep the hyper-parameters fixed. However, as shown in Table~\ref{tbl:uci-vip}, our method still compares even or favorably to them, outperforming them in NLL in 5 out of 9 datasets. This supports our hypothesis in Section~\ref{sec:related_work} that the linear combination posterior used in \citet{ma2018variational} limits their flexibility.

\begin{table}[htbp]\centering\scriptsize
\caption{Average test RMSE and NLL following the setup of \citet{azizpour2018bayesian}. Bold indicates best results.}\label{tbl:mcbn-mnf}
\begin{tabular}{l|ccc|ccc}
\hline
\multicolumn{1}{c|}{\multirow{2}{*}{Dataset}} & \multicolumn{3}{|c|}{Test RMSE} & \multicolumn{3}{|c}{Test NLL}
\\
\multicolumn{1}{c|}{} & MCBN & MNF & f-SVGD & MCBN & MNF & f-SVGD \\ \cline{1-7}
Boston &$\mathbf{2.75\pm 0.05}$ &$2.98 \pm 0.06$ &$\mathbf{2.88\pm 0.14}$ &$\mathbf{2.38\pm 0.02}$ &$2.51 \pm 0.06$ &$2.48 \pm 0.03$ \\
Concrete &$\mathbf{4.78\pm 0.09}$ &$6.57 \pm 0.04$ &$\mathbf{4.85\pm 0.12}$ &$3.45 \pm 0.11$ &$3.35 \pm 0.04$ &$\mathbf{3.02\pm 0.04}$ \\
Kin8nm &$0.07 \pm 0.00$ &$0.09 \pm 0.00$ &$\mathbf{0.06\pm 0.00}$ &$-1.21 \pm 0.01$ &$-1.04 \pm 0.00$ &$\mathbf{-1.34\pm 0.01}$ \\
Power &$3.74 \pm 0.01$ &$4.19 \pm 0.01$ &$\mathbf{3.59\pm 0.03}$ &$2.75 \pm 0.00$ &$2.86 \pm 0.01$ &$\mathbf{2.70\pm 0.01}$ \\
Protein &$3.66 \pm 0.01$ &$4.10 \pm 0.01$ &$\mathbf{3.36\pm 0.03}$ &$2.73 \pm 0.00$ &$2.83 \pm 0.01$ &$\mathbf{2.60\pm 0.01}$ \\
Winered &$0.62 \pm 0.00$ &$\mathbf{0.61\pm 0.00}$ &$0.63 \pm 0.01$ &$0.95 \pm 0.01$ &$\mathbf{0.93\pm 0.00}$ &$0.97 \pm 0.02$ \\
Yacht &$1.23 \pm 0.05$ &$2.13 \pm 0.05$ &$\mathbf{0.67\pm 0.07}$ &$1.39 \pm 0.03$ &$1.96 \pm 0.05$ &$\mathbf{0.68\pm 0.09}$ \\
 \hline \end{tabular}
\end{table}

\begin{table}[htbp]\scriptsize\centering
\caption{Average test RMSE and NLL on UCI datasets, following the setup in \citet{ma2018variational}. Bold indicates best results.}\label{tbl:uci-vip}
\begin{tabular}{c|cc|cc}\hline
         & \multicolumn{2}{c|}{NLL}                        & \multicolumn{2}{|c}{RMSE}             \\
         & VIP-BNN                & f-SVGD                & VIP-BNN      & f-SVGD                \\\hline
Boston   &$2.45\pm 0.04          $&$\mathbf{2.30\pm 0.05}$&$2.88\pm 0.14$&$\mathbf{2.54\pm 0.14}$\\
Concrete &$3.02\pm 0.02          $&$\mathbf{2.90\pm 0.02}$&$4.81\pm 0.13$&$\mathbf{4.35\pm 0.15}$\\
Energy   &$\mathbf{0.60\pm 0.03} $&$0.69\pm 0.03         $&$0.45\pm 0.01$&$0.43\pm 0.01         $\\
Kin8nm   &$-1.12\pm 0.01         $&$-1.11\pm 0.01        $&$0.07\pm 0.00$&$0.07\pm 0.00         $\\
Power    &$2.92\pm 0.00          $&$\mathbf{2.76\pm 0.00}$&$4.11\pm 0.05$&$\mathbf{3.83\pm 0.03}$\\
Protein  &$2.87\pm 0.00          $&$\mathbf{2.85\pm 0.00}$&$4.25\pm 0.07$&$\mathbf{4.10\pm 0.02}$\\
Winered  &$0.97\pm 0.02          $&$\mathbf{0.89\pm 0.01}$&$0.64\pm 0.01$&$\mathbf{0.60\pm 0.01}$\\
Yacht    &$\mathbf{-0.02\pm 0.07}$&$0.75\pm 0.01         $&$\mathbf{0.32\pm 0.06}$&$0.60\pm 0.07         $\\
Naval    &$\mathbf{-5.62\pm 0.04}$&$-4.82\pm 0.10        $&$0.00\pm 0.00$&$0.00\pm 0.00$  \\\hline       
\end{tabular}
\end{table}

\subsubsection{Further Experiments}

As suggested by the reviewers, we add further experiments to evaluate our method.

\paragraph{Convergence Properties of Our Method} We demonstrate the stability of our procedure, by presenting the negative log-likelihood on held-out data during a typical run. The experiment setup follows \citep{azizpour2018bayesian}, and the result is plotted in Figure~\ref{fig:convPlot}. As shown in the figure, the training process of our procedure is fairly stable.

\begin{figure}[htbp]
    \centering
    \includegraphics{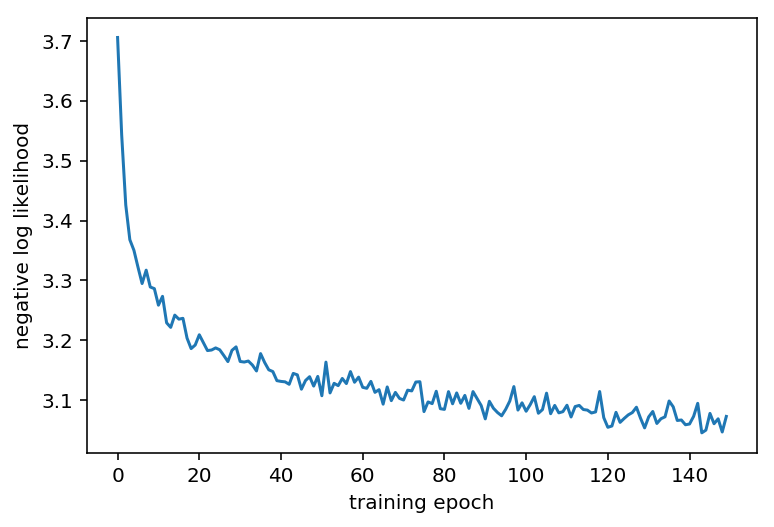}
    \caption{Heldout NLL as a function of training time (in epochs) during a sample run on the Concrete dataset.}
    \label{fig:convPlot}
\end{figure}

\paragraph{Benchmark on a Narrow Architecture} To balance our discussion on weight-space POVI methods, we present a benchmark on a narrower BNN model. More specifically, we use the experiment setup in \citep{ma2018variational}, which uses 10 hidden units in each layer. Other settings are summarized in the previous subsection. We compare f-SVGD with SVGD, and include the results for mean-field VI and MC dropout with $\alpha$-divergence in \citep{ma2018variational} for reference. The results are presented in Table~\ref{tbl:uci-narrower}. We can see that weight-space POVI is a strong baseline under priors with simpler architectures. However, as the issue of over-parameterization still presents, the function-space method still outperform it by a significant margin.

\begin{table}[htbp]\scriptsize\centering
\caption{Comparison with other baselines under a narrower network prior on some UCI datasets. boldface indicates best results.}\label{tbl:uci-narrower}
\begin{tabular}{c|cccc|cccc}
\hline
        & \multicolumn{4}{c|}{NLL}                                                      & \multicolumn{4}{c}{RMSE}                                                 \\ 
        & MFVI         & alpha-dropout & SVGD                  & f-SVGD                & MFVI        & alpha-dropout & SVGD                 & f-SVGD               \\ \hline
Boston  & 2.76 (0.04)  & 2.45 (0.02)   & \textbf{2.42 (0.07)}  & \textbf{2.33 (0.05)}  & 3.85 (0.22) & 3.06 (0.09)   & \textbf{2.77 (0.20)} & \textbf{2.58 (0.12)} \\
Kin8mn  & -0.81 (0.01) & -0.92 (0.02)  & \textbf{-1.11 (0.01)} & \textbf{-1.11 (0.00)} & 0.10 (0.00) & 0.10 (0.00)   & \textbf{0.08 (0.00)} & \textbf{0.08 (0.00)} \\
Power   & 2.83 (0.01)  & 2.81 (0.00)   & 2.79 (0.01)           & \textbf{2.76 (0.00)}  & 4.11 (0.04) & 4.08 (0.00)   & 3.95 (0.02)          & \textbf{3.78 (0.02)} \\
Protein & 3.00 (0.00)  & 2.90 (0.00)   & 2.87 (0.00)           & \textbf{2.85 (0.00)}  & 4.88 (0.04) & 4.46 (0.00)   & 4.29 (0.02)          & \textbf{3.95 (0.02)} \\ \hline
\end{tabular}
\end{table}

\subsection{Adversarial Examples}\label{app:adv}

\paragraph{MNIST Experiment Details} We follow the setup in \citet{li2017dropout}. We use a feed-forward network with ReLU activation and 3 hidden layers, each with 1000 units. For both SVGD and f-SVGD, we use the AdaM optimizer with learning rate $5\times 10^{-4}$; we use a batch size of 1000 and train for 1000 epochs. The attack method is the targeted iterated FGSM with $\ell^\infty$ norm constraint, i.e. for $t=1,\dots,T$, set 
\begin{equation*}
    x_{adv}^{(t)} := x_{adv}^{(t-1)} + \epsilon\cdot \mathrm{sgn}(\nabla \log \EE_{q(f)} [ p(y=0|x_{adv}^{(t-1)},f)]),
\end{equation*}
where $\epsilon=0.01$, and 
$x_{adv}^{(0)}$ denotes the clean input. We remove clean images labeled with ``0''.

For our method and weight-space SVGD, we use 20 particles, and scale the output logits to have a prior variance of $10$. We use an additional 40 particles to generate 8-dimensional prior samples.

A few notes on applying our method to classification problems: it is important to use the \emph{normalized logits} as $f$, so that the model is fully identified; also as for classification problems, $f(x)$ is already high-dimensional for a single data point $x$, one should down-sample $f(x)$ within each $x$. We choose to always include the true label in the down-sampled version, for labeled samples, and share negative labels across different $x$.

Test accuracy and log likelihood on clean samples are reported in Table~\ref{tbl:mnist-clean-samples}.

\begin{table}[htbp]
\centering
\caption{MNIST: Test accuracy and NLL on clean samples}\label{tbl:mnist-clean-samples}
\begin{tabular}{c|ccc}
\hline
            & MC-dropout & SVGD   & f-SVGD \\ \hline
accuracy    & 0.983     & 0.970 & 0.984 \\
average NLL & 0.075     & 0.109 & 0.065 \\ \hline
\end{tabular}
\end{table}

\paragraph{CIFAR Experiment Details} We use the ResNet-32 architecture, defined in \citet{he2016identity}, and uses the same training scheme. We use 8 particles for our method and the ensemble method. For our method, we use 32 particles to generate 6-dimensional prior samples.

The ResNet architecture consists of batch normalization layers, which needs to be computed with the full batch of input. We approximate it with a down-sampled batch, so that more prior samples can be produced more efficiently. In our implementation, we down-sample the input batch to $1/4$ of its original size.

The test accuracy and log likelihood on clean samples are reported in Table~\ref{tbl:cifar-clean-samples}. Our method outperforms the single point estimate, but is slightly worse than the ensemble prediction. Performance drop on clean samples is common that models obtained by adversarially robust methods \citep[cf. e.g.][]{liao2017defense}.

\begin{table}[htbp]
\centering
\caption{CIFAR-10: Test accuracy and NLL on clean samples}\label{tbl:cifar-clean-samples}
\begin{tabular}{c|ccc}
\hline
            & single & ensemble & f-SVGD \\ \hline
accuracy    & 0.925  & 0.937    & 0.934  \\
average NLL & 0.376  & 0.203    & 0.218  \\ \hline
\end{tabular}
\end{table}

\subsection{Contextual Bandit}\label{app:dcb}

Contextual bandit is a classical online learning problem. The problem setup is as follows: for each time $t=1, 2, \cdots, N$, a context $s_t\in \mathcal{S}$ is provided to the online learner, where $\mathcal{S}$ denotes the given context set. The online learner need to choose one of the $K$ available actions $I_t \in \{1, 2, \cdots, K\}$ based on context $s_t$, and get a (stochastic) reward $\ell_{I_t, t}$. The goal of the online learner is to minimize the pseudo-regret
\begin{equation}
    \overline{R}_n^{\mathcal{S}} = \max_{g:\mathcal{S}\to \{1, 2, \cdots, K\}}\mathbb{E}\left[\sum_{t=1}^n \ell_{g(s_t), t} - \sum_{t=1}^n \ell_{I_t, t}\right].
\end{equation}
where $g$ denotes the mapping from context set $\mathcal{S}$ to available actions $\{1, 2, \cdots, K\}$. 
Pseudo-regret measures the regret of not following the best $g$, thus pseudo-regret is non-negative, and minimize the pseudo-regret is equal to find such the best $g$.

For contextual bandits with non-adversarial rewards, Thompson sampling \citep[a.k.a. posterior sampling;][]{Thompson1933On} is a classical algorithm that achieves state-of-the-art performance in practice \citep{chapelle2011empirical}. Denote the underlying ground-truth reward distribution of context $s$ and action $I_t$ as $\nu_{s,I_t}$. In Thompson sampling, we place a prior $\mu_{s, i, 0}$ on reward for context $s$ and action $i$, and maintain $\mu_{s, i, t}$, the corresponding posterior distribution at time $t$. For each time $t=1, 2, \dots, N$, Thompson sampling selects action by 
$$I_t \in \mathop{\arg\max}_{i=\{1, 2, \cdots, K\}}{\hat{\ell}_{i, t}},\quad  \hat{\ell}_{i, t}\sim \mu_{s, i, t}.$$ 
The corresponding posterior is then updated with the observed reward $\ell_{I_t, t}$. The whole procedure of Thompson sampling is shown in Algorithm \ref{alg:ts}.
\begin{algorithm}
\caption{Thompson Sampling}
\label{alg:ts}
\begin{algorithmic}
\STATE {\bf Input:} Prior distribution $\mu_{s, i, 0}$, time horizon $N$
\FOR{time $t = 1, 2, \cdots, N$}
\STATE Observe context $s_t\in \mathcal{S}$
\STATE Sample $\hat{\ell_{i, t}}\sim \mu_{s, i, t}.$
\STATE Select $I_t \in \mathop{\arg\max}_{i=\{1, 2, \cdots, K\}}{\hat{\ell_{i, t}}}$ and get $\ell_{I_t, t}$
\STATE Update the posterior of $\mu_{s_t, I_t, t+1}$ with $\ell_{I_t, t}$.
\ENDFOR
\end{algorithmic}
\end{algorithm}

Notice that contextual bandit problems always face the \emph{exploration-exploitation dilemma}. Exploration should be appropriate, otherwise, we can either exploit too much sub-optimal actions or explore too much meaningless actions. Thompson sampling addresses this issue by each time selecting the actions greedy with the sampled rewards, which is equal to selecting action $i$ with the probability that $i$ can get the highest reward under context $s_t$. This procedure need an accurate posterior uncertainty. Either over-estimate or under-estimate the posterior uncertainty can lead to the failure of balancing exploration and exploitation, which further lead to the failure of Thompson sampling.

Here, we focus on two benchmark contextual bandit problems, called \emph{mushroom} and \emph{wheel}. 

\paragraph{Mushroom Bandit} In mushroom, we use the data from mushroom dataset \citep{schlimmer1981mushroom}, which contains 22 attributes per mushroom and two classes: poisonous and safe. Eating a safe mushroom provides reward +5. Eating a poisonous mushroom delivers reward +5 with probability 1/2 and reward -35 otherwise. If the agent does not eat a mushroom, then the reward is 0. We run 50000 rounds in this problem.

\paragraph{Wheel Bandit}

\begin{figure}
    \centering
    \includegraphics[width=0.3\linewidth]{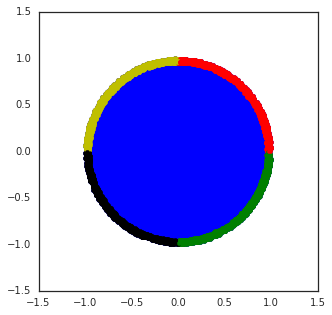}
    \caption{Visualization of the wheel bandit, taken from \citep{riquelme2018deep}. Best viewed in color.}
    \label{fig:wheel}
\end{figure}

Wheel bandit is a synthetic problem that highlights the need for exploration. Let $\delta\in (0,1)$ be an ``exploration parameter''. 
Context $X$ is sampled uniformly at random in the unit circle in $\mathbb{R}^2$. 
There are $k=5$ possible actions: the first action results in a constant reward $\ell_1\sim \mathcal{N}(\mu_1, \sigma^2)$; the reward corresponding to other actions is determined by $X$: 
\begin{itemize}
    \item For contexts inside the blue circle in Figure~\ref{fig:wheel}, i.e. for $X$ s.t. $\norm{X}\le \delta$, the other four actions all result in a suboptimal reward $\mathcal{N}(\mu_2, \sigma^2)$ for $\mu_2 < \mu_1$. 
    \item Otherwise, one of the four contexts becomes optimal depend on the quarter $X$ is in. The optimal action results in a reward of $\mc{N}(\mu_3,\sigma^2)$ for $\mu_3\gg\mu_1$, and other actions still have the reward $\mc{N}(\mu_2,\sigma^2)$. 
\end{itemize}
As the probability that $X$ corresponds to a high reward is $1-\delta^2$, the need for exploration increases as $\delta$ increases, and it is expected that algorithm with poorly calibrated uncertainty will stuck in choosing the suboptimal action $a_1$ in these regions. Such a hypothesis is confirmed in \citep{riquelme2018deep}, making this bandit a particularly challenging problem. In our experiments, we use $50000$ contexts, and set $\delta=0.95$.

\paragraph{Experiment Setup} The model for neural networks is a feed-forward network with 2 hidden layers, each with 100 units. The hyper-parameters for all models are tuned on the mushroom bandit, and kept the same for the wheel bandit. For the baselines, we use the hyper-parameters provided in \citep{riquelme2018deep}. The main difference from \citep{riquelme2018deep} is that we use 20 replicas for bootstrap. For our method, we also use $20$ particles.

\newpage
\section{On Alternative Kernels for Weight-Space POVI}\label{app:svgd-alt-kernels}

Many POVI methods use a kernel to make the gradient flow well-defined for discrete distributions. A natural question is whether we could design the kernel carefully to alleviate the problem of over-parameterization and high dimensionality. This idea is tempting: intuitively, for most POVI methods listed in Table~\ref{tbl:gfchoices}, the kernel defines a \emph{repulsive force term}, which push a particle away from its ``most similar'' neighbors. A better choice of the kernel makes the similarity measure more sensible. In this section, we will list candidates for such a kernel, and show that they do not work in general.

\begin{figure}[htbp]
    \centering
    \includegraphics[width=0.8\linewidth]{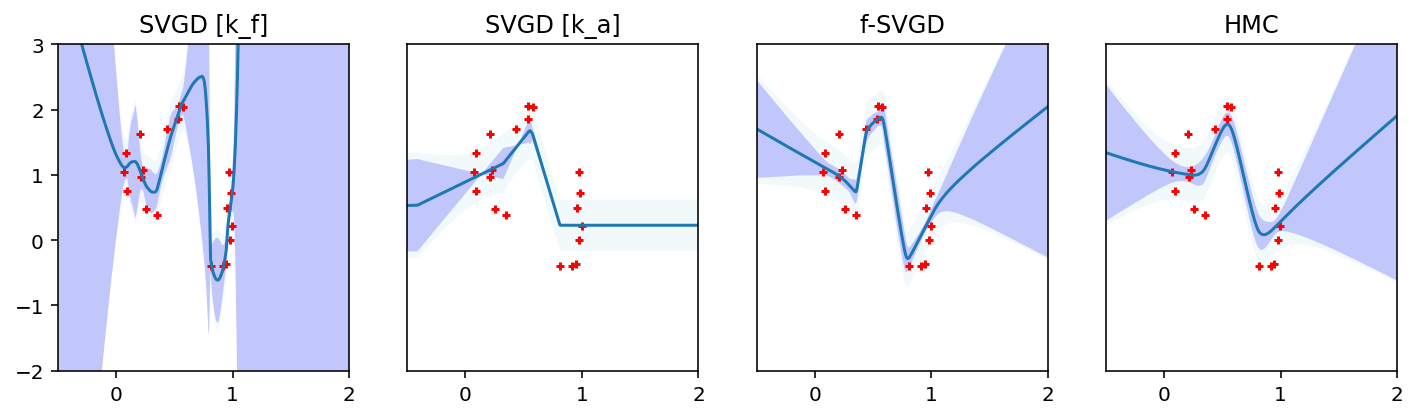}
    \caption{Posterior approximations with weight-space SVGD and alternative kernels. \texttt{k\_f} corresponds to the function-value kernel, and \texttt{k\_a} the activation kernel. We include the results for f-SVGD and HMC for reference.}
    \label{fig:alt-kern-synth}
\end{figure}

\begin{table}[htbp]
\centering\small
\caption{Test NLL on UCI datasets for SVGD with alternative kernels.}\label{tbl:nll-uci-altk}
\begin{tabular}{c|ccc|c}
\hline
\multirow{2}{*}{} & \multicolumn{3}{c|}{Weight-Space SVGD}         & \multirow{2}{*}{Function-Space SVGD} \\
                  & RBF            & $k_a$          & $k_f$          &                                        \\ \hline
Boston            & $2.50\pm 0.03$ & $2.50\pm 0.07$ & $2.49\pm 0.06$ & $2.47\pm 0.08$                         \\
Yacht             & $1.23\pm 0.04$ & $1.35\pm 0.06$ & $1.20\pm 0.07$ & $\mathbf{1.01\pm 0.06}$                \\
Concrete          & $3.08\pm 0.02$ & $3.12\pm 0.03$ & $3.11\pm 0.03$ & $\mathbf{2.96\pm 0.04}$                \\ \hline
\end{tabular}
\end{table}

\paragraph{Evaluation Setup} For kernels considered in this section, we evaluate their empirical performance on the synthetic dataset and the UCI regression datasets, following the same setup as in the main text. We test on two POVI methods: SVGD and w-SGLD. As results are similar, we only report the results for SVGD for brevity. The results are presented in Figure~\ref{fig:alt-kern-synth} and Table~\ref{tbl:nll-uci-altk}.

\paragraph{The ``Function Value'' Kernel} Similar to our proposed method, one could define a \emph{weight-space} kernel on function values, so it directly measures the difference of regression functions; i.e.
\begin{equation}\label{eq:functionValueKernel}
    k_f(\theta^{(1)},\theta^{(2)}) := 
        \EE_{\bx\sim \mu}[k(f(\bx; \theta^{(1)}), f(\bx; \theta^{(2)}))],
\end{equation}
where $k$ is an ordinary kernel on $\mathbb{R}^B$ (e.g. the RBF kernel), $\bx\in \cX^B$, and $\mu$ is an arbitrary measure supported on $\cX^B$.  
We first remark that $k_f$ is not positive definite (p.d.) due to over-parameterization, and there is no guarantee that weight-space inference with $k_f$ will converge to the true posterior in the asymptotic limit\footnote{E.g. \citet{liu2016stein} requires a p.d. kernel for such guarantees.}. Furthermore, it does not improve over the RBF kernel empirically: predictive performance does not improve, and it drastically overestimates the epistemic uncertainty on the synthetic data.\\
To understand the failure of this kernel, take SVGD as an example: the update rule $\theta^{(i)}_{\ell+1} \leftarrow \theta^{(i)}_\ell - \epsilon_\ell \mbf{v}(\theta^{(i)}_\ell)$ is defined with
$$
-\mbf{v}(\theta^{(i)}) = \frac{1}{n}\sum_{j=1}^{n} \left(
\underbrace{\bK_{ij} \nabla_{\theta^{(j)}} \log p(\theta^{(j)}|\bx)}_{\mathrm{grad}_{ij}} + \underbrace{\nabla_{\theta^{(j)}} \bK_{ij}}_{\mathrm{rf}_{ij}}\right).
$$
Finite-sample SVGD is usually understood as follows \citep{liu2016stein}: the first term follows a smoothed gradient direction, which pushes the particles towards high-probability region in posterior; the second term acts as a repulsive force, which prevents particles from collapse together. However, for inference with non positive definite kernels such as $k_f$, it is less clear if these terms still play the same role: 
\begin{enumerate}
    \item When there are particles corresponding to the same regression function, their gradient for log posterior will be averaged in $\mathrm{grad}$. This is clearly undesirable, as these particles can be distant to each other in the weight space, so their gradient contains little learning signal for each other.
    \item While for stationary kernels, the repulsive force 
    $$\mathrm{rf}_{ij} = \frac{1}{n}\sum_j \nabla_{\theta^{(j)}} \bK_{ij} = -\frac{1}{n}\sum_j \nabla_{\theta^{(i)}} \bK_{ij}$$
drives particle $i$ away from other particles, this equality does not hold for $k_f$ which is non-stationary. Furthermore, in such summation over $\nabla_{\theta^{(j)}}\bK_{ij}$, as $\nabla_{\theta^{(j)}}\bK_{ij} = \nabla_{f(\bx; \theta^{(j)})}\bK_{ij} \nabla_{\theta^{(j)}} f(\bx; \theta^{(j)})$, 
assuming $\nabla_{\theta^{(j)}} f(\bx; \theta^{(j)})$ is of the same scale for $j$, 
$\nabla_{\theta^{(j)}} f(\bx; \theta^{(j)})$ will contribute to particle $i$'s repulsive force most if the function-space repulsive force, 
$\nabla_{f(\bx; \theta^{(j)})}\bK_{ij}$ 
is the largest. However, unlike in identifiable models, there is no guarantee that this condition imply $\theta^{(j)}$ is sufficiently close to $\theta^{(i)}$, and mixing their gradient could be detrimental for the learning process. 
\end{enumerate} 
As similar terms also exist in other POVI methods, such an argument is independent to the choice of POVI methods. Also note that both parts of this argument depends on the over-parameterization property of the model, and the degeneracy (not being p.d.) of the kernel.

\paragraph{The Activation Kernel} Another kernel that seems appropriate for BNN inference can be defined using network layer activations. Specifically, let $h(\bx;\theta)$ be the activations of all layers in the NN parameterized by $\theta$, when fed with a batch of inputs $\bx$. The kernel is defined as
\begin{equation*}
    k_a(\theta^{(1)},\theta^{(2)}) := \EE_{\bx\sim \mu}[k(h(\bx;\theta^{(1)}), h(\bx;\theta^{(2)})))].
\end{equation*}
This kernel is positive definite if the batch size $B$ is sufficiently large, but it does not work either: predictive performance is worse than weight-space SVGD with RBF kernel, and as shown in Figure~\ref{fig:alt-kern-synth}, it also suffers from the collapsed prediction issue. Intuitively, our argument on over-parameterization in Section~\ref{sec:bg} should apply to all positive definite kernels.

In conclusion, constructing sensible kernels for BNN in weight-space POVI is non-trivial; on the other hand, our proposed algorithm provides a more natural solution, and works better in practice.

\newpage
\section{Impact of parametric and stochastic approximations}\label{app:parametric-approx}

Our algorithm consists of several approximations. Assuming the function-space prior gradient is known, there remains two approximations, namely the parametric approximation to function particles (Section~\ref{sec:parametric-approximation}), and the stochastic approximation to particle update (Section~\ref{sec:batch-approx}). In this section, we examine the impact of them empirically, by comparing them to an exact implementation of the function-space POVI algorithm in a toy setting.

\begin{figure}[htbp]
    \centering
    \includegraphics{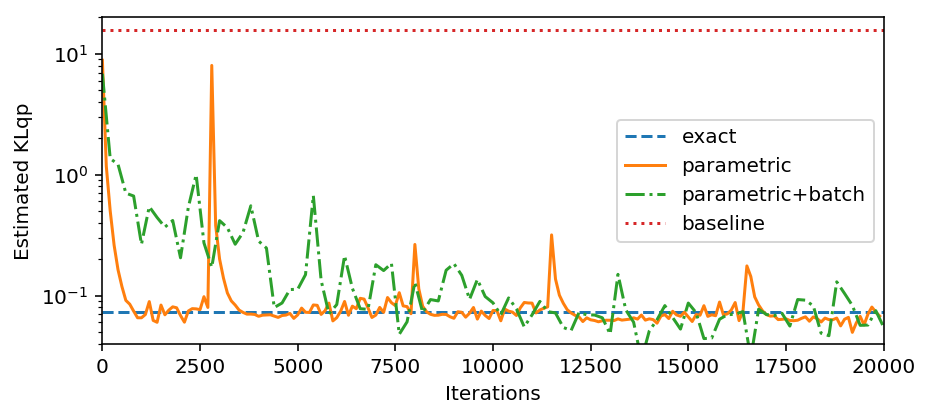}
    \caption{Estimated $\KL{q}{p}$ w.r.t. training iterations, for f-SVGD using different sets of approximations. A baseline value is presented to help understanding the scale of KL divergence in this experiment; see below for details.}
    \label{fig:parametric-approx}
\end{figure}

\paragraph{Experiment Setup}
To make exact simulation of the function-space POVI procedure possible, we consider a 1-D regression problem on a finite-dimensional function space, with a closed form GP prior. Specifically, we set $\cX := \bX_{\mathrm{train}}\cup \bX_{\mathrm{test}} := \{-2, -1.8, -1.6,\dots, +2\} \cup \{1.7, 1.9, 2.1\}$. The training targets are generated by $y_i\sim \mc{N}(\sin(x_i),0.1^2)$. The function-space prior is a GP prior with zero mean and a RBF kernel, which has a bandwidth of $0.5$. The true posterior can be computed in closed form \citep{rasmussen2004gaussian}:
\begin{align*}
    \mathbf{f}(\bX_{\mathrm{test}})|\bX_\train, \bY_\train &\sim \mathcal{N}(\mu, \Sigma), &\text{where}\\
    \mu &:= \bK_{tr}(\bK_{rr}+\sigma^2I)^{-1}(\bY_{\train}),\\
    \Sigma &:= \bK_{tt} - \bK_{tr}(\bK_{rr}+\sigma^2I)^{-1}\bK_{tr}^\top,
\end{align*}
and $\bK_{tt}, \bK_{tr}, \bK_{rr}$ denote the gram matrices $k(\bX_{\mathrm{test}}, \bX_{\mathrm{test}}), k(\bX_{\mathrm{test}},\bX_{\mathrm{train}})$ and $k(\bX_{\mathrm{train}},\bX_{\mathrm{train}})$, respectively. We consider three versions of f-SVGD implementation:
\begin{itemize}
    \item An ``exact'' implementation, which treats $f(\cX)$ as the parameter space and directly applies SVGD. It is tractable in this experiment, as $f(\cX)$ is finite dimensional. The only approximation errors are due to discretization and particle approximation. It is known that as step-size approach 0, and number of particles approaches infinity, this method recovers the true posterior exactly \citep{liu2017stein}.
    \item f-SVGD with the parametric approximation, i.e. the algorithm described in Section~\ref{sec:parametric-approximation}.
    \item f-SVGD with the parametric approximation, and the mini-batch approximation, i.e. the algorithm described in Section~\ref{sec:batch-approx}.
\end{itemize}
To understand the scale of the KL divergence used here, we also report its value using a baseline posterior approximation $q$, defined as the GP posterior conditioned on a \emph{down-sampled training set}, $\tilde{\bX}_\train := \{-2,-1.6,\dots,+2\}$.

We use $1,000$ particles and a step size of $10^{-3}$. We run each version of f-SVGD for 20,000 iterations, and report the exclusive KL divergence of test predictions, $\KL{q[f(\bX_{\mathrm{test}})]}{p[f(\bX_{\mathrm{test}})]}$, where $q(f(\cdot))$ is approximated with a multivariate normal distribution. 

The result is presented in Figure~\ref{fig:parametric-approx}. We can see that even with 1,000 particles, our introduced approximations has a negligible impact on the solution quality, compared to the (well-understood) discretization and particle approximation in the original SVGD algorithm. Furthermore, the final algorithm in Section~\ref{sec:batch-approx} has no convergence issues.

\newpage
\section{The relation between function-space and weight-space POVI, and Frequentist Ensemble Training}\label{app:relation}

Function-space POVI, weight-space POVI, and ensembled gradient descent share a similar form: they all maintain $K$ particles in the parameter space; in each iteration, they all compute a update vector for each particle according to some 
rule, and add it to the parameter. In this section, we discuss this connection in more detail, by comparing the update rule for each algorithm.

Consider a real-valued regression problem, where the likelihood model is $p(y|f(x)) := \mc{N}(y|f(x),\sigma_y^2)$. We choose SVGD as the base POVI algorithm. The \emph{$B$-dimensional} kernel for f-SVGD is the RBF kernel with bandwidth $\sigma_k^2$. For all algorithms, denote the particles maintained at iteration $\ell$ as $\theta^1_\ell,\dots,\theta^K_\ell$. The three algorithms can all be written as
\begin{equation*}
    \theta^i_{\ell+1} := \theta^i_\ell + \epsilon \mbf{u}^i_\ell,
\end{equation*}
for different choices of $\mbf{u}$. We drop the subscript $\ell$ below, as the context is clear. Denote $\mc{J}^i := \left(\frac{\partial f_\ell^i(\bx_b)}{\partial \theta_\ell^i}\right)^\top$, where $\bx_b,\by_b$ is the mini-batch from training set drawn in step $\ell$; denote $f^i(\cdot) := f(\cdot;\theta^i)$.

\paragraph{The Ensemble Method} The \emph{ensemble algorithm} computes $K$ maximum-a-posteriori estimates independently. The update rule is thus
\begin{align}
    \mbf{u}_{\mathrm{ens}}^i &:= \nabla_{\theta^i} \log p(\theta^i|\bX,\bY) \\
    &\approx \frac{N}{B} \nabla_{\theta^i} \log p(\by_b|\theta^i,\bx_b) + \nabla_{\theta^i} \log p(\theta^i)\\
    &= \underbrace{\mc{J}^i \left[\frac{N}{B\sigma_y^2} (\by_b-f^i(\bx_b))\right]}_{\text{BP-ed error signal}} + \nabla_{\theta^i} \log p(\theta^i),\label{eq:ensembleRule}
\end{align}
where $B$ is the batch size.

\paragraph{f-SVGD} The update rule of f-SVGD can be derived by plugging in $\mbf{v}$ from Table~\ref{tbl:gfchoices} into \eqref{eq:svgdReplacedAssignment}. With a RBF kernel and the Gaussian likelihood model, its form is
\begin{align}
    \mbf{u}^i_{\mathrm{f-SVGD}} &= -\mc{J}^i \mbf{v}_{\mathrm{SVGD}}(f_\ell^i) \\
    &= \sum_j \mc{J}^\mred{i} \left[
        \bK_{ij} \nabla_{f^j(\bx)}\log p(f^j(\bx)|\bX,\bY) + 
        \nabla_{f^j(\bx)}\bK_{ij}\right]\\
    &\approx \sum_j \left[
    \bK_{ij} \mc{J}^\mred{i} \left(\frac{N}{B\sigma^2_y}(\by_b-f^j(\bx_b)) + 
    \nabla_{f^j(\bx)} \log p(f^j(\bx)) \right) + 
    \underbrace{\frac{\bK_{ij}}{\sigma^2_k} \mc{J}^\mred{i} (f^i(\bx)-f^j(\bx))}_{\mbf{fRF}_{ij}}\right].\label{eq:fSvgdRuleAppendix}
\end{align}
Denote $S_i:=\sum_j \bK_{ij}$, $\tilde{K}_{ij}:=\bK_{ij}/S_i$, we turn \eqref{eq:fSvgdRuleAppendix} into
\begin{align}
& S_i\mc{J}^i \sum_j \left[
    \tilde{K}_{ij} \left(\frac{N}{B\sigma^2_y}(\by_b-f^j(\bx_b)) + 
    \nabla_{f^j(\bx)} \log p(f^j(\bx)) \right) + \frac{1}{S_i}\mbf{fRF}_{ij}\right]\\
\propto & \mc{J}^i \left\{
    \frac{N}{B\sigma^2_y}\left(\by_b-\sum_j \tilde{K}_{ij} f^j(\bx_b)\right) + 
    \sum_j \tilde{K}_{ij}\nabla_{f^j(\bx)} \log p(f^j(\bx)) +
    \frac{1}{S_i}\mbf{fRF}_{ij}
    \right\}\label{eq:fSvgdRuleTransformed}
\end{align}
The similarity between \eqref{eq:fSvgdRuleAppendix} and \eqref{eq:ensembleRule} is now clear: the first term in \eqref{eq:fSvgdRuleTransformed} corresponds to the back-propagated error signal in \eqref{eq:ensembleRule}; the second term in \eqref{eq:fSvgdRuleTransformed}, the function-space prior gradient, plays the same role as the parametric prior gradient in \eqref{eq:ensembleRule}; A newly added term, $\mbf{fRF}_{ij}$ directly pushes $f^i(\bx)$ away from $f^j(\bx)$, ensuring the particles eventually constitutes a posterior approximation, instead of collapsing into a single MAP estimate. In \eqref{eq:fSvgdRuleTransformed}, all three terms act as function-space error signals, and get back-propagated to the weight space through $\mc{J}^i$.

Consequently, our algorithm has many desirable properties:
\begin{enumerate}
    \item During the initial phase of training, as randomly initialized function particles are distant from each other, the repulsive force is relatively small, and particles will reach high-density regions rapidly like SGD;
    \item the repulsive force term takes into account the prediction of different particles, and prevents them from collapsing into a single, potentially overfitted, mode; as its scale is determined by the principled POVI algorithm, the final particles constitute an approximation to the function-space posterior;
    \item for large models, our algorithm could be easily parallelized following the model parallelism paradigm, as in each iteration, only function-space error signals need to be exchanged among particles, and the communication overhead is proportional to the batch size and number of particles.\footnote{for smaller models like those used in real-valued regression, many (i.e. hundreds of) particles could fit in a single GPU device, and the communication overhead is negligible; furthermore, as a single particle requires far less computational resource than the GPU has, the initial scaling curve could be sub-linear, i.e. using 20 particles will not be 20 times slower than fitting a single particle, as verified in \citet{liu2016stein}.}
\end{enumerate}

\paragraph{SVGD with the function-value kernel} Now we consider weight-space SVGD with the function value kernel \eqref{eq:functionValueKernel}, where both the finite-dimensional base kernel, and $\mu$, are chosen to be the same as in f-SVGD. Therefore, fix a set of samples $\bx$, the evaluation on $(\theta^i,\theta^j)$ of (the stochastic approximation to) \eqref{eq:functionValueKernel} equals $\bK_{ij}$. Still, we denote it as $K^{fv}_{ij}$, as this kernel is a function of \emph{weights}. The update rule is
\begin{align}
    \mbf{u}^i_{\mathrm{SVGD,fv}} &:= \sum_j \left[
        \bK_{ij}^{fv} \nabla_{\theta^\mred{j}}\log p(\theta^j|\bX,\bY) + 
        \nabla_{\theta^\mred{j}}\bK_{ij}^{fv}\right]\\
    &= \sum_j \left[
        \bK_{ij}^{fv} \left(\frac{\partial \theta^j}{\partial f^j}\right)\nabla_{f^j}\log p(f^j|\bX,\bY) + 
        \left(\frac{\partial \theta^j}{\partial f^j}\right)\nabla_{f^j}\bK_{ij}^{fv}\right]\\
    &\approx \sum_j \left\{\bK_{ij}^{fv} \mc{J}^\mred{j} \left( \frac{N}{B\sigma_y^2} (\by_b-f^j(\bx_b))\right) + \nabla_{\theta^j} \log p(\theta^j) + \frac{\bK^{fv}_{ij}}{\sigma_k^2}\mathcal{J}^\mred{j}(f^i(\bx)-f^j(\bx))\right\}.\label{eq:svgdRuleAppendix}
\end{align}

While \eqref{eq:svgdRuleAppendix} and \eqref{eq:fSvgdRuleAppendix} are very similar, the behavior of the resulted algorithms are drastically different. The key reason is that in \eqref{eq:svgdRuleAppendix}, the Jacobian is for particle $j$, and the first and third term in \eqref{eq:svgdRuleAppendix} are gradients for particle $j$. But they are applied to particle $i$. As we have discussed in Appendix~\ref{app:svgd-alt-kernels}, this issue of \emph{gradient mixing} could be highly detrimental for inference on over-parameterized models.

\paragraph{SVGD with p.d. kernel} Lastly, we present the update rule for SVGD using general kernels for completeness. It is
\begin{equation}
    \mbf{u}^i_{\mathrm{SVGD,p.d.}}\approx \sum_j \left\{\bK_{ij}^{fv} \mc{J}^\mred{j} \left( \frac{N}{B\sigma_y^2} (\by_b-f^j(\bx_b))\right) + \nabla_{\theta^j} \log p(\theta^j) + \nabla_{\theta^\mred{j}}\bK_{ij}^{fv}\right\}.
\end{equation}
Although it also mixes gradients from different particles, such a behavior may be less detrimental than in the function-value kernel case,
as the mixing coefficient $\bK_{ij}$ is usually based on similarity between network weights\footnote{
In practice, however, we may re-scale the kernel values to introduce sensible repulsive force (e.g. the ``median trick'' in RBF kernel serves this purpose, as mentioned in \citet{liu2016stein}). In this case, the gradient mixing effect will also be noticeable. We hypothesize that it could be part of the reason of the pathologies in weight-space methods; as a supporting evidence, we experimented SVGD on the random feature approximation to deep GP \citep{cutajar2016random} on the synthetic dataset from \citet{louizos2016structured}. Using a 3-layer deep GP with Arc kernel and 5 GPs per layer, all particles collapse to a \emph{constant function}, which is \emph{not the posterior mean} given by HMC. A quantitative characterization of the impact of gradient-mixing is beyond the scope of this work.}. The issue of weight-space POVI using p.d. kernel is due to over-parameterization, as we have discussed in Section~\ref{sec:bg}.

\end{document}